\ifdefined\journalstyle
  \documentclass{svproc}

\else
  \documentclass[11pt]{article} % plain/arXiv formatting
  \def\arxivstyle{1}
  \usepackage{theorem}
  \newtheorem{theorem}{Theorem}%[section]       
  \newtheorem{lemma}%[theorem]
  {Lemma}
  \newenvironment{proof}{\begin{trivlist}
  \item[\hskip\labelsep{\it Proof.}]}{\end{trivlist}}
  \newcommand{\qed}{$\Box$}
  
  \usepackage{pdfsync}
  \usepackage[hidelinks]{hyperref}

  \usepackage{latexsym,amsfonts,amsmath,amssymb,url}
  \usepackage{graphicx}
  \usepackage{subcaption} % for subfigures

  \usepackage[usenames,dvipsnames,svgnames]{xcolor}

 \fi

\usepackage{type1cm}        % activate if the above 3 fonts are
\usepackage{makeidx}         % allows index generation
\usepackage{graphicx}        % standard LaTeX graphics tool
\graphicspath{{KKNS-figures/}}

\usepackage{multicol}        % used for the two-column index
\usepackage[bottom]{footmisc}% places footnotes at page bottom

\usepackage{newtxtext}       %
\usepackage[varvw]{newtxmath}       % selects Times Roman as basic font

\usepackage{siunitx}% for table alignments

\newcounter{algorithm}%[section]

\newcommand{\bsh}{{\boldsymbol{h}}}

\newcommand{\bsm}{{\boldsymbol{m}}}

\newcommand{\bst}{{\boldsymbol{t}}}

\newcommand{\bsv}{{\boldsymbol{v}}}

\newcommand{\bsy}{{\boldsymbol{y}}}
\newcommand{\bsz}{{\boldsymbol{z}}}

\newcommand{\bszero}{{\boldsymbol{0}}} % vector of zeros
\newcommand{\bsone}{{\boldsymbol{1}}}  % vector of ones
\newcommand{\bsalpha}{{\boldsymbol{\alpha}}}

\newcommand{\bsgamma}{{\boldsymbol{\gamma}}}

\newcommand{\bsnu}{{\boldsymbol{\nu}}}

\newcommand{\bsDelta}{{\boldsymbol{\Delta}}}

\newcommand{\rd}{{\mathrm{d}}}
\newcommand{\ri}{{\mathrm{i}}}

\newcommand{\bbE}{{\mathbb{E}}}

\newcommand{\bbN}{{\mathbb{N}}}

\newcommand{\bbR}{{\mathbb{R}}}

\newcommand{\bbZ}{{\mathbb{Z}}}
\DeclareSymbolFont{bbold}{U}{bbold}{m}{n}
\DeclareSymbolFontAlphabet{\mathbbold}{bbold}

\newcommand{\calA}{{\mathcal{A}}}

\newcommand{\calE}{{\mathcal{E}}}

\newcommand{\calI}{{\mathcal{I}}}
\newcommand{\calJ}{{\mathcal{J}}}
\newcommand{\calK}{{\mathcal{K}}}

\newcommand{\calO}{{\mathcal{O}}}

\newcommand{\calR}{{\mathcal{R}}}
\newcommand{\calS}{{\mathcal{S}}}

\newcommand{\calW}{{\mathcal{W}}}

\newcommand{\setu}{{\mathfrak{u}}}
\newcommand{\setv}{{\mathfrak{v}}}

\providecommand{\argmin}{\operatorname*{argmin}}

\begin{document}

\ifdefined\journalstyle

\mainmatter              % start of a contribution
\title{Lattice-based Deep Neural Networks:\\ Regularity and Tailored Regularization}
\titlerunning{Lattice-based Deep Neural Networks}  % abbreviated title (for running head)
%                                     also used for the TOC unless
%                                     \toctitle is used
%
\author{Alexander Keller\inst{1} \and Frances Y. Kuo\inst{2} \and
Dirk Nuyens\inst{3} \and Ian H. Sloan\inst{2}}
%
%\authorrunning{Alexander Keller et al.} % abbreviated author list (for running head)
\authorrunning{Keller, Kuo, Nuyens, Sloan} % abbreviated author list (for running head)
%
%%%% list of authors for the TOC (use if author list has to be modified)
\tocauthor{Alexander Keller, Frances Y. Kuo, Dirk Nuyens, and Ian H. Sloan}
\institute{NVIDIA, Germany,\\
 \email{akeller@nvidia.com} \\ 
 %WWW home page:
 %\texttt{http://users/\homedir iekeland/web/welcome.html}
 \and
 School of Mathematics and Statistics, UNSW Sydney, \\
 \email{f.kuo@unsw.edu.au}, \email{i.sloan@unsw.edu.au} \\
  \and
 Department of Computer Science, KU Leuven, Belgium, \\
 \email{dirk.nuyens@kuleuven.be}
}
\else
  \title{Lattice-based Deep Neural Networks:\\ Regularity and Tailored Regularization}
  \author{Alexander Keller, Frances Y. Kuo, Dirk Nuyens, and Ian H. Sloan}
  \date{March 2026}
\fi

\maketitle              % typeset the title of the contribution

\begin{abstract}
This survey article is concerned with the application of lattice rules to Deep Neural Networks (DNNs), 
lattice rules being a family of quasi-Monte Carlo methods. They have demonstrated effectiveness in various contexts for high-dimensional integration and function approximation.
They are extremely easy to implement thanks to their very simple formulation --- all that is required is a good integer generating vector of length matching the dimensionality of the problem. 
In recent years there has been a burst of research activities on the application and theory of DNNs. We review our recent article on using lattice rules as training points for DNNs with a smooth activation function, where we obtained explicit regularity bounds of the DNNs. By imposing restrictions on the network parameters to match the regularity features of the target function, we prove that DNNs with tailored lattice training points can achieve good theoretical generalization error bounds, with implied constants independent of the input dimension. We also demonstrate numerically that DNNs trained with our tailored regularization perform significantly better than with standard $\ell_2$ regularization.

\ifdefined\journalstyle
 \keywords{Lattice rules, lattice algorithms, deep neural networks, tailored lattice training points,    tailored regularization.}
\else
 \bigskip\noindent\textbf{keywords:} Lattice rules, lattice algorithms, deep neural networks, tailored lattice training points, tailored regularization.
\fi
\end{abstract}
\section{Introduction}

This survey article reviews the application of so-called \emph{lattice rules} to \emph{Deep Neural Networks} (DNNs) \cite{GBC16,LBH15}.
Lattice rules are a family of \emph{quasi-Monte Carlo} (QMC) methods, which are equal-weight cubature rules for approximating integrals over the $s$-dimensional unit cube
\begin{align} \label{KKNS:eq:qmc}
  I(F) \,:=\, \int_{[0,1]^s} F(\bsy)\,\rd\bsy
  \quad\approx\quad
  Q(F) \,:=\, \frac{1}{N} \sum_{k=1}^N F(\bst_k),
\end{align}
where the points $\bst_1,\ldots,\bst_N \in [0,1]^s$ are designed to be more uniformly distributed than random points, with the aim of improving the slow $\calO(N^{-1/2})$ convergence of simple Monte Carlo methods.
A large community of QMC researchers actively contributes to the field. Monographs and review articles in this domain are \cite{DKP22,DKS13,DP10,Hic98,KS05,Lem09,LP14,Nie92,SJ94}. This survey article has been prepared as part of the proceedings for the 16th biennial conference dedicated to advancing research in this area.

We (the four authors of this article) are particularly fond of lattice rules, due to their simplicity of implementation, as well as a group structure that facilitates error analysis.
Lattice rules (and more generally, QMC methods) have been very successful in a range of applications. Selected contributions by the present authors span computer graphics (e.g., \cite{ImgSynthR1Lattice,SimOnR1L,Myths04,IEQandML,IndustryQMC}), uncertainty quantification for partial differential equations (PDEs) with random inputs (e.g., \cite{DKLNS14,GGKSS19,GKNSSS15,GKNSS11,GKNSS18b,GKKSS24,HHKKS24,KKKNS22,KKS20,KKS24,KN16,KSS12}), and mathematical finance (e.g., \cite{ACN13a,ACN13b,GKSS24,WS05,WS07,WS11}), where quasi-Monte Carlo techniques were initially explored in foundational works \cite{ABG98,CMO97,PT95}.

In this survey article, we summarize findings from our recent work \cite{KKNS25a}. There, we interpret DNNs as \emph{nonlinear algorithms for function approximation} and employ \emph{lattice points as training points}. These lattice-based DNNs are thus positioned as alternatives to the approximation algorithms such as truncated trigonometric series, multivariate polynomial and kernel methods. A natural question arises regarding the comparative performance of these methods. We provide partial theoretical insights and encourage further exploration in this area. Other applications of QMC methods in machine learning \cite{GK22,MT18,SB14} include works such as \cite{DF21,FW23,GHN25,KV23,LMR20,LMRS21,MR21,XFW24}, especially \cite{BWM18,Liu23,LO21} on variational inference.

The structure of this article is as follows. In Section~\ref{KKNS:sec:qmc} we give a high level summary of three theoretical settings for lattice rules used for integration. We additionally highlight related results on both integration and approximation, which were not covered in \cite{KKNS25a}. In Section~\ref{KKNS:sec:dnn} we first introduce the DNNs and explain that for the error analysis we need to have regularity bounds on the DNNs. We summarize three key theorems from \cite{KKNS25a}: Theorem~\ref{KKNS:thm:reg} provides the regularity bounds of the DNNs in terms of the ``network parameters'' and derivative bounds on the ``activation function''; Theorem~\ref{KKNS:thm:diff} shows that we can restrict the network parameters of the DNNs to match the regularity features of the target function in order to control the norms which affect the ``generalization gap''; Theorem~\ref{KKNS:thm:err} states that we can construct lattice generating vectors to achieve certain error bounds independently of the input dimension $s$. We also present a tailored regularization approach from \cite{KKNS25a} which was shown numerically in \cite{KKNS25a} to perform better than the standard $\ell_2$ regularization. In Section~\ref{KKNS:sec:num} we carry out numerical experiments, and in Section~\ref{KKNS:sec:conc} we give concluding remarks.

While this is mostly a survey article, there are some new contributions. In Lemma~\ref{KKNS:lem:common} we generalize the derivative bounds of the activation functions from \cite{KKNS25a}. In particular, we consider the generalized ``swish'' activation function $x/(1+e^{-cx})$ which converges pointwise to the ``ReLU'' activation function $\max\{x,0\}$ when $c\to\infty$. In Lemma~\ref{KKNS:lem:lower} we show that the factorial growth in the derivative bounds obtained in \cite{KKNS25a} and in Lemma~\ref{KKNS:lem:common} cannot be avoided, by providing a lower bound. The numerical experiments in Section~\ref{KKNS:sec:num} for the generalized swish function with varying parameters are new. In the Appendix we provide full details of the proof of Theorem~\ref{KKNS:thm:err} which was omitted from \cite{KKNS25a}; these make use of the review material in Section~\ref{KKNS:sec:qmc}.

\section{Review of lattice theory for integration and approximation} \label{KKNS:sec:qmc}

The $N$ points of a (rank-$1$) lattice rule are given by
\begin{align} \label{KKNS:eq:lat}
 \bst_k := \left\{\frac{k\bsz}{N}\right\} = \frac{k\bsz\bmod N}{N}, \quad k=1,\ldots,N,
\end{align}
where $\bsz\in\bbZ^s$ is the \emph{generating vector}, and 
the braces around a vector indicate that we take the fractional part
of each component in the vector. Equivalently, the points can be indexed from $0$ to $N-1$, with $\bst_0\equiv\bst_N$.
The quality of a lattice rule is determined by the choice of its generating vector $\bsz$. 
We may restrict each component $z_j$ to the set $\bbZ_N := \{z\in\bbZ : 1\le z\le N-1 \mbox{ and } \gcd(z,N)=1\}$, which ensures that every one-dimensional projection of the lattice rule is exactly the equally-spaced points $\{k/N: k=0,1,\ldots,N-1\}$. Note that repeated components between dimensions may be bad as they lead to a bad lower-dimensional projection. Figure~\ref{KKNS:fig:lat} (left)
illustrates a $64$-point lattice rule in two dimensions ($s=2$).

\begin{figure}[t]
 \centering
 \includegraphics[width = 12cm]{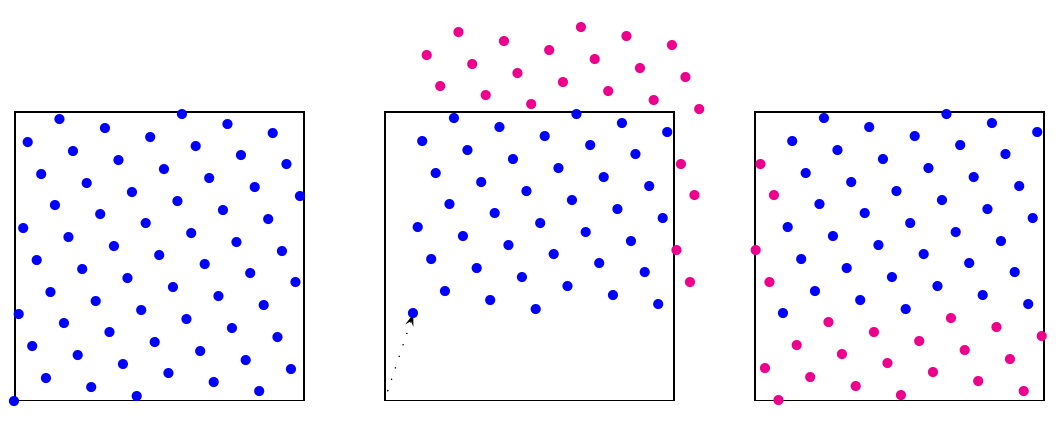}
 \caption{Applying a $(0.1,0.3)$-shift to a $64$-point lattice rule in two dimensions.
 Left: original lattice rule. Middle: moving all points by $(0.1,0.3)$.
 Right: wrapping the points back inside the unit cube.}
 \label{KKNS:fig:lat}
\end{figure}

A \emph{randomly shifted lattice rule} (or more generally, a randomly shifted QMC method) takes the form
\begin{align*}
  Q(F,\bsDelta) \,:=\, \frac{1}{N} \sum_{k=1}^N F(\{\bst_k+\bsDelta\}),
\end{align*}
where $\bsDelta \in [0,1]^s$ is the \emph{shift} with independent components uniformly generated from $[0,1]$, and the braces mean taking the fractional parts as above. Figure~\ref{KKNS:fig:lat} (right) illustrates the result of shifting a $64$-point lattice rule in two dimensions by $\bsDelta=(0.1,0.3)$. Random shifting realizes an unbiased estimator to the integral, since $\bbE[Q(F,\cdot)] = I(F)$. Taking multiple (e.g., $16$) independent random shifts of the same deterministic rule provides a practical error estimate, see e.g., \cite{DKS13} and the numerical experiments in Section~\ref{KKNS:sec:num}.

Lattice points can also be used for function approximation. We refer to these as \emph{lattice-based algorithms}. Consider one-periodic real-valued $L_2$ functions defined on $[0,1]^s$
with absolutely convergent Fourier series
\begin{align*}
  F(\bsy) \,=\, \sum_{\bsh\in\bbZ^s} \hat{F}_\bsh\,e^{2\pi\ri\bsh\cdot\bsy},
  \quad\mbox{with}\quad
  \hat{F}_\bsh \,:=\, \int_{[0,1]^s} F(\bsy)\,e^{-2\pi\ri\bsh\cdot\bsy}\,\rd\bsy,
\end{align*}
where $\bsh\cdot\bsy = h_1y_1 + \cdots + h_sy_s$ denotes the usual dot product.
It is not possible to evaluate the sum for infinitely many indices $\bsh\in\bbZ^s$. Moreover, the Fourier coefficients $\hat{F}_\bsh$ are themselves integrals and need to be approximated. Thus, one lattice-based strategy to approximate $F$ is to first truncate the Fourier series to a well-chosen finite index set $\calA_s \subset \bbZ^s$ and then approximate the Fourier coefficients for $\bsh\in\calA_s$ by a lattice rule with $N$ points and generating vector $\bsz\in \bbZ_N^s$, giving the \emph{truncated trigonometric series} (see e.g., \cite{CKNS20,CKNS21,KMN25,KSW06,KWW09c} and \cite{BKUV17,Kam13,KPV15,PV16,KMNN21})
\begin{align} \label{KKNS:eq:trig}
  A^\dagger(F)(\bsy) \,:=\, \sum_{\bsh\in\calA_s} \hat{F}_\bsh^a \,e^{2\pi\ri\bsh\cdot\bsy},
  \quad\mbox{with}\quad
  \hat{F}_\bsh^a \,:=\, \frac{1}{N} \sum_{k=1}^N F(\bst_k)\,e^{-2\pi\ri \bsh\cdot\bst_k}.
\end{align}
Thanks to the structure of lattice points, the evaluation of $\hat{F}_\bsh^a$ for all $\bsh\in\calA_s$ can be computed in $\calO(N\,\log N)$ operations using a one-dimensional Fast Fourier Transform (FFT). The cost for evaluating $A^\dagger(F)$ then depends on the structure and size of the index set $\calA_s$.

Suppose that $F$ belongs to a reproducing kernel Hilbert space with kernel $K$. Then another lattice-based strategy to approximate $F$ is to use the \emph{kernel method} (see e.g., \cite{KKKNS22,KKS24,KMNSS24,ZLH06,ZKH09})
\begin{align} \label{KKNS:eq:ker}
  A^*(F)(\bsy) \,:=\, \sum_{k=1}^N a_k\,K(\bst_k,\bsy),
\end{align}
where the coefficients $a_k$ are obtained by solving the linear system
\begin{align} \label{KKNS:eq:sys}
  F(\bst_{k'}) \,=\, \sum_{k=1}^N a_k\,K(\bst_k,\bst_{k'}) \quad\mbox{for all } k'=1,\ldots,N,
\end{align}
which ensures that the kernel approximation interpolates $F$ at the lattice points.
If the reproducing kernel is shift-invariant, i.e., $K(\bsy,\bsy') = K(\{\bsy-\bsy'\},\bszero\}$, as is the case of Fourier series with trigonometric basis functions, then, together with the structure of lattice points, the linear system \eqref{KKNS:eq:sys} involves a circulant matrix and so the coefficients $a_k$ can be solved in $\calO(N\,\log N)$ operations using FFT. The cost for evaluating $A^*(F)$ then depends on the definition of the kernel $K$.

Just as in lattice rules for the integration problem, the quality of the lattice-based approximation algorithms $A^\dagger(F)$ and $A^*(F)$ depends on the choice of the lattice generating vector $\bsz$ (and in the case of $A^\dagger(F)$ it also depends on the choice of the index set $\calA_s$). In various settings for both integration and approximation, we know how to construct generating vectors \emph{component by component} to achieve good theoretical error bounds: inductively, the $j$th component $z_j\in\bbZ_N$ of the lattice generating vector $\bsz$ is chosen to minimize some error criterion or to satisfy a required property, with the previously chosen components $z_1,\ldots, z_{j-1}$ held fixed.
See the papers cited above and the references therein, most of them rely on the ``fast'' construction method from \cite{NC06a} that utilizes the structure of lattice points with FFT. We will review some of these settings in the next subsections.

\subsection{Lattice rules for integration} \label{KKNS:sec:lat-int}

Denoting the underlying function space for the integrands $F$ generically by $\calW$, we define the \emph{worst case error} of an $N$-point lattice rule \eqref{KKNS:eq:lat} with generating vector $\bsz$ (or more generally, a QMC method \eqref{KKNS:eq:qmc}) in this function space by
\[
  e^{\rm wor}_N(\bsz) := \sup_{F\in\calW,\, \|F\|_\calW \le 1} |I(F) - Q(F)|,
\]
where $\|\cdot\|_{\calW}$ denotes the norm of $\calW$. Then we have the generic integration error bound
\begin{align} \label{KKNS:eq:KH-ineq}
  |I(F) - Q(F)| \le e^{\rm wor}_N(\bsz)\, \|F\|_\calW \qquad\mbox{for all } F\in\calW.
\end{align}
Such an error bound separates the dependence on the integrand $F$ from the dependence on the generating vector $\bsz$. In classical QMC theory we have the famous Koksma--Hlawka inequality taking the same form~\eqref{KKNS:eq:KH-ineq}. 

Following \cite{DSWW06,SW98}, contemporary QMC settings work with \emph{weighted} function spaces. There is a weight parameter $\gamma_\setu>0$ associated with each subset of variables $\bsy_\setu := (y_j)_{j\in\setu}$ for $\setu\subseteq\{1:s\} := \{1,2,\ldots,s\}$, and together they moderate the relative importance of different subsets of variables, with a small value of $\gamma_\setu$ meaning that the function depends more weakly on $\bsy_\setu$. These weights $\bsgamma := (\gamma_\setu)_{\setu\subseteq\{1:s\}}$ not only appear in the definition of the norm $\|\cdot\|_{\calW}$ but also in the formula for the worst case error $e^{\rm wor}_N(\bsz)$. With given weights, the strategy is to choose the lattice generating vector~$\bsz$ to minimize the worst case error $e^{\rm wor}_N(\bsz)$ as much as possible by a
component-by-component construction, as briefly discussed earlier in this section. The resulting error bound is naturally proved by induction. Sometimes we consider randomly shifted lattice rules and then the
error bounds hold in a root-mean-square (r.m.s.) sense with respect to the random
shift. 

We summarize three theoretical settings for lattice rules which are needed later in this survey. In the following, we write $\partial^\alpha_\setu:= \prod_{j\in\setu} (\frac{\partial}{\partial y_j})^\alpha$, and
\[
  \varrho_\alpha(\lambda) := 
  \frac{2\zeta(2\alpha\lambda)}{(2\pi)^{2\alpha\lambda}}, \qquad \lambda > \tfrac{1}{\alpha}.
\]
where $\zeta(x) := \sum_{h=1}^\infty h^{-x}$ is the Riemann zeta
function.

\begin{itemize}
\item [\textnormal{(a)}] \cite[Theorem~5.8]{DKS13} For a randomly shifted lattice rule in a
    (non-periodic, ``unanchored'') weighted Sobolev space
    ${\calW_{1,\bsgamma}}$ of smoothness one, with norm
\begin{align} \label{KKNS:eq:norm-np}
 \|F\|_{\calW_{1,\bsgamma}}^2
 := \sum_{\setu\subseteq\{1:s\}} \frac{1}{\gamma_\setu}
 \int_{[0,1]^{|\setu|}} \bigg| \int_{[0,1]^{s-|\setu|}}
 \partial_\setu^1 F(\bsy)\,
 \rd\bsy_{\{1:s\}\setminus\setu}\bigg|^2\rd\bsy_\setu,
\end{align}
a good generating vector $\bsz$ can be constructed such that 
\begin{align} \label{KKNS:eq:wce-np}
 {\rm r.m.s.}\; e^{\rm wor}_N(\bsz)
  \le \bigg(
  \frac{2}{N} \sum_{\emptyset\ne \setu \subseteq \{1:s\}} \gamma_\setu^\lambda\,
  \big[\varrho_1(\lambda)\,2^\lambda\big]^{|\setu|}\bigg)^{\frac{1}{2\lambda}}\,
  \quad\forall\;\lambda\in (\tfrac{1}{2},1].
\end{align}

\item [\textnormal{(b)}] \cite[Theorem~5.12]{DKS13} For a lattice rule in a (periodic, Hilbert)
    weighted Korobov space $\calW_{\alpha,\bsgamma}$ with integer
    smoothness parameter $\alpha \ge 1$, with norm
\begin{align} \label{KKNS:eq:norm-per}
 \|F\|_{\calW_{\alpha,\bsgamma}}^2
 := \sum_{\setu\subseteq\{1:s\}} 
 \frac{1}{\gamma_\setu}
 \int_{[0,1]^{|\setu|}} \bigg| \int_{[0,1]^{s-|\setu|}}
 \partial_\setu^\alpha F(\bsy)\,
 \rd\bsy_{\{1:s\}\setminus\setu}\bigg|^2\rd\bsy_\setu,
\end{align}
a good generating vector $\bsz$ can be constructed such that 
\begin{align} \label{KKNS:eq:wce-per}
  e^{\rm wor}_N(\bsz)
  \le \bigg(
  \frac{2}{N} \sum_{\emptyset\ne \setu\subseteq \{1:s\}}
  \gamma_\setu^\lambda\,
  \big[\varrho_\alpha(\lambda)\big]^{|\setu|}
  \bigg)^{\frac{1}{2\lambda}}\, \quad\forall\;\lambda\in
  (\tfrac{1}{2\alpha},1].
\end{align}

\item [\textnormal{(c)}] \cite[Lemma~3.1]{KKS20} For a lattice rule in a (periodic,
    non-Hilbert) weighted Korobov space $\calK_{\alpha,\bsgamma}$ with
    integer smoothness parameter $\alpha \ge 2$, with norm bound
\begin{align} \label{KKNS:eq:norm-K}
 \|F\|_{\calK_{\alpha,\bsgamma}}
 \le \max_{\setu\subseteq\{1:s\}} 
 \frac{1}{\gamma_\setu}
 \int_{[0,1]^{|\setu|}} \bigg| \int_{[0,1]^{s-|\setu|}}
 \partial_\setu^\alpha F(\bsy)\,
 \rd\bsy_{\{1:s\}\setminus\setu}\bigg| \,\rd\bsy_\setu,
\end{align}
a good generating vector $\bsz$ can be constructed such that 
\begin{align} \label{KKNS:eq:wce-K}
  e^{\rm wor}_N(\bsz)
  \le \bigg(
  \frac{2}{N} \sum_{\emptyset\ne \setu \subseteq \{1:s\}}
  \gamma_\setu^\lambda\,
  \big[\varrho_\alpha(\lambda/2)\big]^{|\setu|}
  \bigg)^{\frac{1}{\lambda}}\, \quad\forall\;\lambda\in
  (\tfrac{1}{\alpha},1].
\end{align}
\end{itemize}
All three results mentioned above were established for cases where
$N$ is a power of a prime, and analogous results hold for any
composite $N$. Additionally, it is possible to construct "embedded"
lattice rules \cite{CKN06} that are applicable across a range of values of $N$
(e.g., from $2^{10}$ to $2^{20}$), albeit with the trade-off
of introducing a small additional constant factor in the error bound.

We have intentionally presented the results in a way that ensures the
norm \eqref{KKNS:eq:norm-np} for the non-periodic case~(a) aligns with the
norm \eqref{KKNS:eq:norm-per} for the periodic case~(b) when $\alpha = 1$.
The Hilbert Korobov norm is typically defined in terms of the decay of
Fourier coefficients with a real parameter $\alpha > 1/2$.
Only when $\alpha$ is an integer can this norm be expressed in terms
of mixed derivatives as in \eqref{KKNS:eq:norm-per}. In some cases, the
parameter $\alpha$ is scaled by $1/2$, and frequently, the factor
$(2\pi)^{2\alpha|\setu|}$ is absorbed into the weights~$\gamma_\setu$.
Similarly, the non-Hilbert Korobov norm is typically defined via Fourier
series decay with a real parameter $\alpha > 1$, and only for integer
values of $\alpha$ can it satisfy a bound like \eqref{KKNS:eq:norm-K}, which
involves mixed derivatives. In classical lattice rule theory \cite{SJ94},
the corresponding worst case error is often referred to as $P_\alpha$.
Here, we use the notation $\calK_{\alpha,\bsgamma}$ to distinguish
this space from those associated with norms \eqref{KKNS:eq:norm-np} and
\eqref{KKNS:eq:norm-per}, avoiding potential confusion.

Setting~(a) achieves a convergence rate close to $\calO(N^{-1})$. In contrast,
both settings (b) and (c) can achieve a convergence rate close to
$\calO(N^{-\alpha})$, but the conditions imposed on the weights
$\gamma_\setu$ to ensure that the associated constants remain
independent of the input dimensionality $s$ can vary significantly.
In Theorem~\ref{KKNS:thm:err}, we will highlight a particular advantage of the non-Hilbert setting.

There are plenty of other theories for lattice rules, including tent-transformed lattice rules (see e.g., \cite{CKNS16,DNP14,Hic02}), lattice rules for the unbounded domain $\bbR^s$ (see e.g.,\cite{NK14,NS23}), lattice rules with a random number of points (see e.g., \cite{DGS22,KKNU19,KNW23}), and lattice rules based on a \emph{median of means} (see e.g., \cite{GL22}).

\subsection{Lattice-based algorithms for function approximation}

We switch now to the problem of function approximation. 
Denoting again the underlying function space for the target function $F$ generically by $\calW$, we define the \emph{worst case $L_2$ approximation error} of a lattice-based algorithm $A(F)$ in this function space by
\[
  e^{\rm wor-app}_N(A,\bsz) := \sup_{F\in\calW,\, \|F\|_\calW \le 1} \|F - A(F)\|_{L_2},
\]
where $A(F)$ could be e.g., the truncated trignometric series $A^\dagger(F)$ defined in \eqref{KKNS:eq:trig}, or the kernel method $A^*(F)$ defined in \eqref{KKNS:eq:ker}. Then we have the generic approximation error bound in the same form as \eqref{KKNS:eq:KH-ineq}
\[
  \|F - A(F)\|_{L_2} \le e^{\rm wor-app}_N(A,\bsz)\, \|F\|_\calW \qquad\mbox{for all } F\in\calW.
\]

It is known that kernel methods are optimal in the sense of worst case approximation error among all algorithms using the same set of points. Thus
\[
  e^{\rm wor-app}_N(A^*,\bsz) \le e^{\rm wor-app}_N(A^\dagger,\bsz)
  \qquad\mbox{for all } \bsz\in\bbZ_n^s.
\]
We know how to construct good generating vectors for the truncated trigonometric series $A^\dagger(F)$, either by minimizing an upper bound on the worst case approximation error $e^{\rm wor-app}_N(A^\dagger,\bsz)$ directly (see e.g., \cite{CKNS20,CKNS21,KMN25,KSW06,KWW09c}), or by using \emph{reconstruction lattices} (see e.g., \cite{BKUV17,Kam13,KPV15,PV16,KMNN21}). The generating vectors $\bsz$ obtained in both approaches can then be used in the kernel method $A^*(F)$. An alternative error criterion based on the power function for kernel method is experimented in \cite{KMNSS24}. The approximation error convergence rate is close to $\calO(N^{-\alpha/2})$ in the function space setting (b) above. It has been proved in \cite{BKUV17} that this is the best rate possible using lattice points, compared to the best possible rate close to $\calO(N^{-\alpha})$ that can be achieve by other non-lattice-based algorithms. While this is true, lattice-based algorithms are still competitive because they allow for a simple and highly efficient 
FFT implementation. \emph{Subsampled} lattices have been proved to improve the convergence rate, see \cite{BKPU24}. ``Doubling the rate'' is possible with kernel methods for sufficiently smooth functions, see \cite{KS25}.

\section{Deep Neural Networks as algorithms for function approximation} \label{KKNS:sec:dnn}

We now turn to Deep Neural Networks (DNNs) (see e.g., \cite{GBC16,LBH15}) as nonlinear function approximators. Extending the lattice-based approximation approach from Section~\ref{KKNS:sec:qmc}, we examine DNNs trained on data sampled at lattice points. While some applications provide fixed training data, others permit practitioner-selected points—such as constructing DNN surrogates for costly-to-evaluate functions.
We summarize the regularity results for DNNs from our recent paper \cite{KKNS25a} and provide new extensions.

Instead of working only with real-valued functions $F(\bsy)$ as discussed until now, we will allow vector-valued outputs, i.e., we are interested in problems of the general form
\[ 
\mbox{Given data $\bsy\in Y \subset\bbR^s$, compute observable $G(\bsy)\in Z = \bbR^{N_{\mathrm{obs}}}$}.
\]
For example, if $\bsy$ is a vector of $s$ input parameters to a parametric PDE, then $G(\bsy)$ is a vector of observables obtained from the PDE solution which is a somewhat
smooth function of $\bsy$. The observable may be the average of the PDE
solution over a subset of the domain, in which case $N_{\mathrm{obs}}=1$;
or it may be the vector of values of the PDE solution at the points of a
finite element mesh, in which case $N_{\mathrm{obs}}$ may be the number of
interior finite element mesh points.

\subsection{Non-periodic and Periodic Deep Neural Networks}

For input $\bsy\in Y := [0,1]^s$, we consider two formulations of deep neural networks (DNNs):
the standard feed-forward DNN and a specialized architecture designed for periodic target functions, as recently introduced in \cite{KKNS25a}:
\begin{itemize}
\item [\textnormal{(a)}] Non-periodic DNN:
\begin{align} \label{KKNS:eq:DNN-np}
  G_\theta^{[L]}(\bsy)
  := W_L\, \sigma (\cdots W_2\, \sigma (W_1\, \sigma (W_0\,\bsy + \bsv_0) + \bsv_1) + \bsv_2 \cdots) + \bsv_L.
\end{align}
\item [\textnormal{(b)}] Periodic DNN:
\begin{align} \label{KKNS:eq:DNN-per}
  G_\theta^{[L]}(\bsy)
  := W_L\, \sigma (\cdots W_2\, \sigma (W_1\, \sigma (W_0\,\sin(2\pi\bsy) + \bsv_0) + \bsv_1) + \bsv_2 \cdots) + \bsv_L.
\end{align}
\end{itemize}
In this context, $W_\ell \in \bbR^{d_{\ell+1} \times d_\ell}$ are the weight matrices,
$\bsv_\ell\in\bbR^{d_{\ell+1}\times 1}$ are the bias vectors, and
$L$ is the depth of the neural network. The input layer indexed by $\ell = 0$ inherits its dimension
$d_0 = s$ from the input vector $\bsy$, while the dimension $d_{L+1} = N_{\rm obs}$
of the output layer $\ell = L+1$ is the number of observables. For $1 \leq \ell \leq N$,
a layer is called a hidden layer and has width $d_\ell$.
The entirety $\theta := (W_\ell,\bsv_\ell)_{\ell=0}^L \in \Theta$ of network parameters
describes a  fully-connected neural network
(also known as a multi-layer perceptron). The set of all permissible network parameters is denoted by $\Theta$.
The $N_{\rm DNN} := \sum_{\ell=0}^L (d_{\ell+1}\times d_\ell) + \sum_{\ell=0}^L
 d_{\ell+1}
$
parameters need to be trained. 
The ensemble of depth $L$, the widths $d_\ell$, and other parameters used in the
training phase (see below) are known as the hyperparameters.

The scalar nonlinear activation function $\sigma$ is applied element-wise to vectors obtained from the affine transformations.
Common scalar activation functions are introduced and discussed in Lemma~\ref{KKNS:lem:common} below.
Extending \eqref{KKNS:eq:DNN-np} by an element-wise sine function applied to the input layer yields
\eqref{KKNS:eq:DNN-per}. This sine function may be considered a periodic activation function \cite{AMBLW20}.
In fact, the nonlinear function \eqref{KKNS:eq:DNN-per} is one-periodic with respect to the input
parameters~$\bsy$, as in the theoretical settings (b) and (c) in Section~\ref{KKNS:sec:qmc}.
The particular form \eqref{KKNS:eq:DNN-per}
is designed to handle a target function $G(\bsy)$ that is itself a
function of $\sin(2\pi\bsy)$, such as the quantities of interest from
\cite{HHKKS24,KKKNS22,KKS20,KKS24}.

Training the deep neural network (DNN) involves determining appropriate network parameters
$\theta$, such that $G_\theta^{[L]}(\bsy)$ serves as an accurate approximation of $G(\bsy)$,
particularly at points not included in the training set. The training set comprises points
$\bsy_1, \ldots, \bsy_N \in Y$ and their corresponding function evaluations $G(\bsy_1), \ldots, G(\bsy_N)$,
which act as observables. In this work, we will rely on synthetically generated data (SGD), specifically
employing lattice points for training.

\subsection{Error}

``Training'' is done by minimising an appropriate ``loss function'' $\calJ(\theta)$. Following \cite{KKNS25a} (which in turn follows \cite{LMR20,LMRS21,MR21}), we consider
\begin{align} \label{KKNS:eq:ET}
  \theta^* := \argmin_{\theta\in\Theta} \Big(\calJ(\theta) + \lambda\,\calR(\theta) \Big),
 \quad
 \calJ(\theta) := \frac{1}{N}\sum_{k=1}^N
 \big\| G(\bsy_k) - G_\theta^{[L]}(\bsy_k)\big\|_2^2,
\end{align} 
where we take the vector $2$-norm with respect to the $N_{\rm obs}$ components in the observables, and
where $\calR(\theta)$ is a regularization term which we will tailor according to our theory. The quality of the DNN is expressed in terms of the
\emph{generalization error} (or $L_2$ approximation error)
\begin{align} \label{KKNS:eq:EG}
 \calE_G := 
 \bigg(\int_Y \big\|G(\bsy) - G_\theta^{[L]}(\bsy)\big\|_2^2\,\rd\bsy\bigg)^{1/2}
\le \calE_T + |\calE_G - \calE_T|,
\end{align}
where $\calE_T := \sqrt{\calJ(\theta)}$ is the computable \emph{training error},
and $|\calE_G - \calE_T|$ is the so-called \emph{generalization gap}. 

The square roots in $\calE_G$ and $\calE_T$ make it difficult to analyze the generalization gap directly. So we use the loose estimate $|\calE_G - \calE_T| \le \sqrt{|\calE_G^2 - \calE_T^2|}$, with
\begin{align} \label{KKNS:eq:start}
  |\calE_G^2 - \calE_T^2|
  &= \bigg| \int_Y \big\| G(\bsy) - G_{\theta}^{[L]}(\bsy) \big\|_2^2 \,\rd\bsy
  - \frac{1}{N} \sum_{k=1}^N \big\|G(\bsy_k) - G_{\theta}^{[L]}(\bsy_k)\big\|_2^2\, \bigg| \nonumber\\
  &\le \sum_{p=1}^{N_{\rm obs}} \bigg|
  \int_Y \big( G(\bsy)_p - G_{\theta}^{[L]}(\bsy)_p \big)^2 \,\rd\bsy
  - \frac{1}{N} \sum_{k=1}^N \big(G(\bsy_k)_p - G_{\theta}^{[L]}(\bsy_k)_p\big)^2
  \bigg| \nonumber\\
  &\le \sum_{p=1}^{N_{\rm obs}} e^{\rm wor}_N(\bsz)\,
  \big\| \big(G(\cdot)_p - G_{\theta}^{[L]}(\cdot)_p\big)^2 \big\|_{\calW},
\end{align}
where for each component $p$ we have a generic error bound in the form of \eqref{KKNS:eq:KH-ineq}, namely, the
worst case integration error $e^{\rm wor}_N(\bsz)$ for a lattice rule (with $N$ points and 
generating vector $\bsz$) in a function space $\calW$, multiplied by the norm of the scalar-valued
integrand $F(\bsy) := (G(\bsy)_p - G_{\theta}^{[L]}(\bsy)_p)^2$ in $\calW$.

We can now apply the theoretical settings (a)--(c) from Section~\ref{KKNS:sec:lat-int}, but in each setting we need to show that the integrand $(G(\cdot)_p - G_{\theta}^{[L]}(\cdot)_p)^2$ indeed belongs to the appropriate weighted function space and has a nice bound on its norm. Since the norms in all three settings are defined in terms of mixed derivatives, we need regularity bounds on the DNNs with respect to the input parameters $\bsy\in Y$.

\subsection{Regularity}

Following \cite{KKNS25a}, we assume that there exist positive sequences
$(\beta_j)_{j\ge 1}$, $(R_\ell)_{\ell\ge 1}$, $(A_n)_{n\ge 1}$ for which
the following three key assumptions hold:
\begin{enumerate}
\item The columns of the matrix $W_0$ have bounded vector
    $\infty$-norms
\begin{align} \label{KKNS:eq:beta}
  \|W_{0,:,j}\|_\infty := \max_{1\le p\le d_1} |W_{0,p,j}| \;\le\; \beta_j
  \quad\mbox{for all}\quad j = 1,\ldots,s.
\end{align}
\item The matrices $W_\ell$ for $\ell\ge 1$ have bounded matrix
    $\infty$-norms 
\begin{align} \label{KKNS:eq:R}
  \|W_\ell\|_\infty :=
  \max_{1\le p\le d_{\ell+1}} \sum_{q=1}^{d_\ell} |W_{\ell,p,q}| \;\le\; R_\ell
  \quad\mbox{for all}\quad \ell\ge 1.
\end{align}
\item The activation function $\sigma:\bbR\to\bbR$ is smooth and
    its derivatives satisfy
\begin{align} \label{KKNS:eq:sigma}
  \|\sigma^{(n)}\|_\infty := \sup_{x\in\bbR} |\sigma^{(n)}(x)| \;\le\; A_n
  \quad\mbox{for all}\quad n\ge 1.
\end{align}
\end{enumerate}
The assumption \eqref{KKNS:eq:beta}~and~\eqref{KKNS:eq:R} are analogous to the
assumptions in \cite[Propositions~3.2 and~3.8]{LMRS21}. The assumption
\eqref{KKNS:eq:sigma} allows for a fairly generic smooth activation
functions. 

Specifically, as in \cite{KKNS25a} we will illustrate our results for
activation functions whose derivatives are bounded in the sense of \eqref{KKNS:eq:sigma} by
the common form
\begin{align} \label{KKNS:eq:common}
  A_n = \xi\,\tau^n\,n!
  \qquad\mbox{for some $\xi>0$ and $\tau>0$}.
\end{align}
The following lemma gives the derivative bounds for some popular activation functions.

\begin{lemma} \label{KKNS:lem:common}
For $c>0$, the following generalized activation functions satisfy \eqref{KKNS:eq:sigma} with $A_n$ of the form~\eqref{KKNS:eq:common}:
\begin{align*} 
 {\rm sigmoid}_c(x) &:= \displaystyle\frac{1}{1+e^{-cx}},
 & A_n &= c^n\,n! && \mbox{(i.e., $\xi = 1$, $\tau = c$)};
 \vspace{0.2cm} \nonumber\\
 {\rm tanh}_c(x) &:= \displaystyle\frac{e^{cx}-e^{-cx}}{e^{cx}+e^{-cx}},
 & A_n &= (2c)^n\,n! && \mbox{(i.e., $\xi=1$, $\tau=2c$)};
 \vspace{0.2cm} \\
 {\rm swish}_c(x) &:= \displaystyle\frac{x}{1+e^{-cx}},
 & A_n &= \tfrac{1.1}{c}\,c^n\,n! && \mbox{(i.e., $\xi = \frac{1.1}{c}$, $\tau=c$)}. \nonumber
\end{align*}
\end{lemma}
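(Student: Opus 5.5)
The plan is to reduce all three functions to the standard case $c=1$ by scaling, and then bound the derivatives of the standard sigmoid through a partial-fraction (pole) expansion. Write $\varsigma(x):=1/(1+e^{-x})$, so that ${\rm sigmoid}_c(x)=\varsigma(cx)$. The chain rule gives ${\rm sigmoid}_c^{(n)}(x)=c^n\varsigma^{(n)}(cx)$, so the first claim follows once we show $\|\varsigma^{(n)}\|_\infty\le n!$.

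For this bound we use the Mittag-Leffler expansion
\begin{align*}
\varsigma(x)=\tfrac12+\sum_{k\in\bbZ}\Big(\frac{1}{x-(2k+1)\pi\ri}\Big)_{\rm sym},
\end{align*}
where the sum is taken symmetrically. Differentiating $n\ge1$ times gives the absolutely convergent series
\begin{align*}
\varsigma^{(n)}(x)=(-1)^n n!\sum_{k\in\bbZ}(x-(2k+1)\pi\ri)^{-n-1}.
\end{align*}
For $n\ge 1$ and real $x$ we then bound
\begin{align*}
|\varsigma^{(n)}(x)|\le n!\sum_k|(2k+1)\pi|^{-n-1}\le n!\,\frac{2(1-2^{-2})\zeta(2)}{\pi^{2}}=\frac{n!}{4}\le n!.
\end{align*}
A simpler alternative is Cauchy's estimate on a disc of radius $r<\pi$ about $x$, where $|\varsigma|$ stays bounded. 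However, that route gives a factor $r^{-n}$ with $r<\pi$, which may not reach exactly $n!$ for small $n$, so the pole expansion is the cleaner path. We check $n=1$ directly: $\varsigma'=\varsigma(1-\varsigma)\le 1/4$.

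For tanh we use the identity ${\rm tanh}_c(x)=2\varsigma(2cx)-1$. Hence ${\rm tanh}_c^{(n)}(x)=2(2c)^n\varsigma^{(n)}(2cx)$, and the bound $|\varsigma^{(n)}|\le n!/4$ gives $|{\rm tanh}_c^{(n)}|\le (2c)^n n!/2\le(2c)^n n!$.

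For swish we write ${\rm swish}_c(x)=\frac1c\,g(cx)$ with $g(u)=u\,\varsigma(u)$. Then ${\rm swish}_c^{(n)}(x)=c^{n-1}g^{(n)}(cx)$, and it suffices to prove $\|g^{(n)}\|_\infty\le 1.1\,n!$. This is the main obstacle, because $g$ is unbounded and the crude Leibniz bound $g^{(n)}=u\varsigma^{(n)}+n\varsigma^{(n-1)}$ fails for large $|u|$. We handle it using the same pole expansion. Multiplying the expansion by $u$ and using $u/(u-a_k)=1+a_k/(u-a_k)$ with $a_k=(2k+1)\pi\ri$, the polynomial part contributes only to derivatives of order at most $1$, and for $n\ge2$ we get
\begin{align*}
g^{(n)}(u)=(-1)^n n!\sum_k a_k(u-a_k)^{-n-1}.
\end{align*}
Since $|u-a_k|\ge|a_k|$, this gives
\begin{align*}
|g^{(n)}(u)|\le n!\sum_k|a_k|^{-n}\le n!\cdot2(1-2^{-n})\zeta(n)\pi^{-n}.
\end{align*}
For $n=2$ the right-hand side is $2\cdot\tfrac34\cdot\tfrac{\pi^2}{6}\pi^{-2}\cdot 2=\tfrac12\cdot2!$, and it decreases for larger $n$, so it is well below $1.1\,n!$. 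For $n=1$ we compute directly $g'(u)=\varsigma(u)+u\varsigma'(u)$. Its supremum is attained where $g''=0$, and numerically it is about $1.0998\le1.1$; this is where the constant $1.1$ comes from. Verifying this one-variable maximum, for instance by locating the root of $2\varsigma'+u\varsigma''$ near $u\approx2.4$ and bounding $g'$ there rigorously, is the only delicate numerical step.
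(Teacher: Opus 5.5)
Your proof is correct. Its reduction to $c=1$ is exactly the paper's proof, which consists only of the scaling identities ${\rm sigmoid}_c(x)={\rm sigmoid}_1(cx)$, $\tanh_c(x)=\tanh_1(cx)$ and ${\rm swish}_c(x)={\rm swish}_1(cx)/c$.

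\textbf{Comparison with the paper.} The paper does not prove the $c=1$ bounds $n!$, $2^n n!$, $1.1\,n!$; it imports them from \cite{KKNS25a}, where they all come from the sigmoid bound obtained via the Eulerian-number formula quoted in the proof of Lemma~\ref{KKNS:lem:lower}. What you add is a self-contained proof of these base bounds from the pole expansion of $\sigma=\tfrac12+\tfrac12\tanh(\cdot/2)$. This costs the (standard) partial-fraction expansion of $\tanh$ instead of a real-variable identity, but it is much sharper:
\begin{itemize}
\item Your intermediate bound $n!\sum_k|(2k+1)\pi|^{-n-1}=\frac{2(2^{n+1}-1)\zeta(n+1)}{(2\pi)^{n+1}}\,n!$ is attained for odd $n$ at $x=0$, the point where $\sigma=\tfrac12$ used in Lemma~\ref{KKNS:lem:lower}. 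There $|x-a_k|=|a_k|$ and all terms of the series have the same sign. So your upper bound coincides with the paper's lower bound and gives $\|\sigma^{(n)}\|_\infty$ exactly for odd $n$.
\item It also yields $\|\sigma^{(n)}\|_\infty\le\frac{\pi}{4}\,\pi^{-n}\,n!$, i.e.\ \eqref{KKNS:eq:common} with $\tau=1/\pi$ instead of $\tau=1$, which would shrink $S_L$.
\item For swish it shows the constant $1.1$ is needed only at $n=1$, since $\|g^{(n)}\|_\infty\le n!/4$ for all $n\ge 2$.
\end{itemize}

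\textbf{Three small repairs.}
\begin{itemize}
\item The splitting $u/(u-a_k)=1+a_k/(u-a_k)$ produces the divergent series $\sum_k 1$. Instead, for $n\ge2$ use the Leibniz formula $g^{(n)}=u\,\sigma^{(n)}+n\,\sigma^{(n-1)}$ that you dismissed. Insert the two absolutely convergent pole series and use $u(u-a_k)^{-n-1}-(u-a_k)^{-n}=a_k(u-a_k)^{-n-1}$; this gives your formula for $g^{(n)}$ rigorously.
\item At $n=2$ your bound evaluates to $\tfrac14\cdot 2!$, not $\tfrac12\cdot 2!$. This is harmless.
\item The $n=1$ step needs no numerics. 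With $t=e^{-u}$ one has $g'(u)=\frac{1}{1+t}+\frac{ut}{(1+t)^2}$, and $g'(u)\le 1.1$ is equivalent to $h(u):=0.1\,e^{u}+1.2+1.1\,e^{-u}-u\ge 0$. The function $h$ is convex, $h'(u)=0$ exactly at $e^{u}=11$, and $h(\ln 11)=2.4-\ln 11>0$ because $e^{2.4}>11$. The identity $g(u)-g(-u)=u$ gives $g'(u)+g'(-u)=1$, hence $g'\ge-0.1$ as well. So $\|g'\|_\infty\le 1.1$.
\end{itemize}
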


\begin{proof}
The results for the standard definitions with $c=1$ were proved in \cite{KKNS25a}. We now extend the results to general $c>0$. We have ${\rm sigmoid}_c(x) = {\rm sigmoid}_1(cx)$ and therefore 
\[
 \big|{\rm sigmoid}_c^{(n)}(x)\big| 
  = c^n\,\big|{\rm sigmoid}_1^{(n)}(cx)\big|
  \le c^n\,n!\,.
\]
Similarly, we have $\tanh_c(x) = \tanh_1(cx)$ and therefore 
\[
 \big|\tanh_c^{(n)}(x)\big|
 = c^n\,\big|{\rm tanh}_1^{(n)}(cx)\big|
 \le c^n\times 2^n\,n! = (2c)^n\,n!\,.
\]
The generalized swish is related in a different way to the standard swish, namely,
${\rm swish}_c(x) = {\rm swish}_1(cx)/c$, and in this case
\[
 \big|{\rm swish}_c^{(n)}(x)\big|
 = c^{n-1}\,\big|{\rm swish}_1^{(n)}(cx)\big|
 \le c^{n-1}\times 1.1\,n!
 = \tfrac{1.1}{c}\,c^n\,n!\,.
\]
This completes the proof. \hfill\qed
\end{proof}

Note that ${\rm ReLU}(x) := \max\{x,0\}$ is not smooth and so condition
\eqref{KKNS:eq:sigma} does not hold. However, we have
\[
 {\rm swish}_c(x) \to {\rm ReLU}(x) \quad\mbox{as}\quad c \rightarrow \infty.
\] 
Trivially, ${\rm swish}_c(0) = 0 = {\rm ReLU}(0)$; for $x>0$, ${\rm swish}_c(x) \to x = {\rm ReLU}(x)$ as $c\to\infty$; for $x<0$, ${\rm swish}_c(x) \to 0 = {\rm ReLU}(x)$ as $c\to\infty$.
Both ${\rm ReLU}(x)$ and ${\rm swish_c}(x)$ are unbounded as $x\to+\infty$.

The next lemma shows that the factorial growth in the derivative bounds in Lemma~\ref{KKNS:lem:common} cannot be avoided. This is because the bounds in \cite{KKNS25a} for the case $c=1$ were all derived from the bound of the standard sigmoid function for which we give a lower bound below.

\begin{lemma} \label{KKNS:lem:lower}
For $\sigma(x) = {\rm sigmoid}_1(x) = 1/(1+e^{-x})$ we have for all $n\ge 1$,
\[
  \|\sigma^{(n)}\|_\infty \le \frac{n^n}{(n+1)^{n+1}}\,n! < (n-1)! < n!\,,
\]
and for all odd $n\ge 1$ we have the lower bound
\[
  \|\sigma^{(n)}\|_\infty
  \ge |\sigma^{(n)}(\tfrac{1}{2})| 
  = \frac{2(2^{n+1}-1)\,\zeta(n+1)}{(2\pi)^{n+1}}\,n! 
 > \frac{1}{\pi^n}\,n!\,.
\]
\end{lemma}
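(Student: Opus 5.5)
The plan is to work throughout with the variable $u=\sigma(x)\in(0,1)$, noting that $1-u=e^{-x}/(1+e^{-x})$. For the \emph{upper bound} I would first establish, by induction on $n$, the Eulerian-number representation
\[
 \sigma^{(n)}(x) = \sum_{k=0}^{n-1} (-1)^k\, A(n,k)\, \frac{e^{-(k+1)x}}{(1+e^{-x})^{n+1}},
\]
where $A(n,k)\ge 0$ are the Eulerian numbers, with $\sum_k A(n,k)=n!$. The induction step is a direct differentiation followed by the standard Eulerian recurrence $A(n+1,k)=(k+1)A(n,k)+(n+1-k)A(n,k-1)$. Each summand equals $(1-u)^{k+1}u^{n-k}$, which is a monomial $u^a(1-u)^b$ with $a,b\ge1$ and $a+b=n+1$. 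Its maximum over $[0,1]$ is $a^ab^b/(n+1)^{n+1}$. This is a log-convex-type expression in $a$, so it is maximized at the endpoints $a=n$ or $b=n$, which gives $n^n/(n+1)^{n+1}$. The triangle inequality then yields $\|\sigma^{(n)}\|_\infty\le \frac{n^n}{(n+1)^{n+1}}\,n!$. Since $\frac{n^n}{(n+1)^{n+1}}\,n = (\frac{n}{n+1})^{n+1}<1$, this quantity is $<(n-1)!$, which is in turn trivially $<n!$.

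For the \emph{lower bound} the key identity is $\sigma(x)=\tfrac12+\tfrac12\tanh(x/2)$. Hence the odd derivatives of $\sigma$ at the symmetry point $x=0$ come from the Taylor series
\[
 \tanh z=\sum_{k\ge1}\frac{2^{2k}(2^{2k}-1)B_{2k}}{(2k)!}\,z^{2k-1}.
\]
For odd $n=2k-1$ this gives
\[
 \sigma^{(n)}(0)=\tfrac12\,2^{-n}\,2^{n+1}(2^{n+1}-1)B_{n+1}/(n+1).
\]
I would then insert Euler's formula $|B_{2k}|=2(2k)!\,\zeta(2k)/(2\pi)^{2k}$, which yields exactly $\frac{2(2^{n+1}-1)\zeta(n+1)}{(2\pi)^{n+1}}\,n!$. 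The evaluation point in the statement should be read as the centre of symmetry of $\sigma$, i.e.\ $x=0$, equivalently the point where $\sigma=\tfrac12$. I would prove the identity there; at $x=0$ one can check directly that $\sigma'=\tfrac14$.

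The final comparison is the step I expect to be delicate. Writing the value as $\frac{(2^{n+1}-1)}{2^{n}}\cdot\frac{\zeta(n+1)}{\pi}\cdot\frac{n!}{\pi^n}$, the prefactor tends to $2/\pi<1$. Already at $n=1$ the exact value $\tfrac14$ is below $1/\pi$. So the clean inequality that actually follows is, using $\zeta>1$ and $(2^{n+1}-1)/2^n\ge \tfrac32$,
\[
 \|\sigma^{(n)}\|_\infty>\frac{3}{2\pi}\cdot\frac{n!}{\pi^n}\ \ (>\tfrac{n!}{\pi^{n+1}}).
\]
I would present this weaker bound, which still shows that factorial growth (with $\tau\ge1/\pi$) is unavoidable, and I would flag that the constant $1$ in front of $n!/\pi^n$ in the statement needs this correction. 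The real content of the lemma, namely genuine $n!\,\pi^{-n}$ growth along the odd derivatives, is unaffected.
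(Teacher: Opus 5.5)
Your proposal is correct, and both of your corrections to the statement are right. The upper bound follows the paper: the paper cites the same Eulerian-number representation from its earlier work and bounds each term of the sum.

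For the lower bound, the paper writes ``consider $x=1/2$'', but it then substitutes $\sigma(x)=1-\sigma(x)=\tfrac12$ into the Eulerian identity. So it really evaluates at $x=0$, exactly as you read it. It then obtains the Bernoulli number from the alternating sum of Eulerian numbers (NIST~26.14.11) rather than from the Taylor series of $\tanh$. That is an equivalent route, since the identity comes from the same $\tanh$ generating function. Your version has the advantage of making the evaluation point unambiguous.

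The paper's last step justifies $>\frac{1}{\pi^n}\,n!$ via ``$\zeta(n+1)\ge\zeta(2)$''. This is backwards, because $\zeta$ is decreasing on $(1,\infty)$. In fact the ratio of the exact value to $n!/\pi^n$ is $(2-2^{-n})\,\zeta(n+1)/\pi$. This equals $\pi/4$ at $n=1$ and is below $2\zeta(4)/\pi<0.7$ for $n\ge 3$. So the stated inequality fails for every odd $n$, not only $n=1$.

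Your constant $\frac{3}{2\pi}$ is valid and still forces $\tau\ge 1/\pi$. Using $\zeta(n+1)>1+2^{-(n+1)}+3^{-(n+1)}$ one can even reach $\frac{2}{\pi}$, which is the limiting value of the ratio.

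Two small slips need fixing. First, the sign $(-1)^k$ in your Eulerian representation is only correct for odd $n$; for $n=2$ it yields $-\sigma''$. Differentiating $w^{k+1}/(1+w)^{n+1}$ with $w=e^{-x}$ gives the coefficient recurrence $c_{n+1,k}=-(k+1)c_{n,k}+(n+1-k)c_{n,k-1}$. This propagates $c_{n,k}=(-1)^{n-1-k}A(n,k)$, not $(-1)^kA(n,k)$, so your induction as written would not close. The slip is harmless for the upper bound, where only absolute values enter. Second, at $n=1$ the link $(n-1)!<n!$ is an equality; the statement has the same flaw. The conclusion $\|\sigma^{(n)}\|_\infty<n!$ survives because your first inequality is strict.
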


\begin{proof}
The upper bound was proved in \cite{KKNS25a} using the identity \cite[formula~(15)]{MinWil93}
\begin{align*} 
  \sigma^{(n)}(x) = \sum_{k=1}^n (-1)^{k-1} E(n,k-1)\,[\sigma(x)]^k\,[1-\sigma(x)]^{n+1-k},
  \quad n\ge 1,
\end{align*}
where $E(n,k)$ is the Eulerian number. To obtain a lower bound we consider $x = 1/2$,
\begin{align*}
  \|\sigma^{(n)}\|_\infty \ge |\sigma^{(n)}(\tfrac{1}{2})|
  = \frac{1}{2^{n+1}} \bigg|\sum_{k=1}^n (-1)^{k-1} E(n,k-1)\bigg|
  = \frac{1}{2^{n+1}} \frac{2^{n+1}(2^{n+1}-1)\,|B_{n+1}|}{n+1},
\end{align*}
where we used the property that an alternating sum of Eulerian numbers is related a Bernoulli number
by \cite[Formula~(26.14.11)]{NIST}. Note that $B_{n+1}=0$ for even $n$. So we consider odd $n = 2m-1$ for $m\ge 1$ and use \cite[Formula~(25.6.2)]{NIST} for $|B_{n+1}| = |B_{2m}|$ to get
\begin{align*}
 |\sigma^{(n)}(\tfrac{1}{2})|
 = \frac{2^{n+1}-1}{n+1}\,\frac{2\zeta(2m)\,(2m)!}{(2\pi)^{2m}}
 = \frac{2(2^{n+1}-1)\,\zeta(n+1)}{(2\pi)^{n+1}}\,n! 
 > \frac{1}{3\pi^{n-1}}\,n!
 > \frac{1}{\pi^n}\,n!\,,
\end{align*}
where the Riemann zeta function satisfies $\zeta(n+1)\ge \zeta(2) = \pi^2/6$.
\hfill\qed
\end{proof}

In order to state our regularity results for the DNNs, we need to introduce further notation.
Let $\bbN_0 := \{0,1,2,\ldots\}$ be the set of nonnegative integers. For a
multiindex $\bsnu = (\nu_j)_{j\ge 1} \in \bbN_0^\infty$ we define its
order to be $|\bsnu| := \sum_{j\ge 1} \nu_j$, and we consider the index
set $\calI := \{\bsnu \in \bbN_0^\infty : |\bsnu| < \infty\}$. We write
$\bsm\le\bsnu$ when $m_j\le \nu_j$ for all $j\ge 1$.
We define
the mixed partial derivative operator with respect to the parametric
variables $\bsy$ by $\partial^\bsnu := \prod_{j\ge 1}
(\frac{\partial}{\partial y_j})^{\nu_j}$ and $\partial^{\bsnu_\setu} :=
\prod_{j\in\setu} (\frac{\partial}{\partial y_j})^{\nu_j}$.
We will need the \emph{Stirling number of the second kind} which are defined
for integers $n$ and $k$ by $\calS(n,0):=\delta_{n,0}$ (i.e., it is 1 if $n=0$
and is $0$ if $n\ge 1$), $\calS(n,k) := 0$ for $k> n$, and otherwise
\[
  \calS(n,k) := \frac{1}{k!}\sum_{i=0}^k (-1)^{k-i} \binom{k}{i} i^n, \qquad n\ge k.
\]

\begin{theorem}[{{\cite[Theorem~2.2]{KKNS25a}}}] \label{KKNS:thm:reg}
Let the sequences $(\beta_j)_{j\ge 1}$, $(R_\ell)_{\ell\ge 1}$,
$(A_n)_{n\ge 1}$ be defined as in \eqref{KKNS:eq:beta}, \eqref{KKNS:eq:R},
\eqref{KKNS:eq:sigma}, respectively, with $A_n$ given by \eqref{KKNS:eq:common} in terms of parameters $\xi,\tau>0$. For
depth $L\ge 1$, any component $1\le p\le N_{\rm obs}$, and any
multiindex $\bsnu\in\calI$ $($including $\bsnu=\bszero$$)$, we have the
following regularity bounds:
\begin{itemize}
\item [\textnormal{(a)}] The non-periodic DNN defined in
    \eqref{KKNS:eq:DNN-np} satisfies
\begin{align} \label{KKNS:eq:reg-np}
  |\partial^\bsnu G_\theta^{[L]}(\bsy)_p|
  \le C_L\, \,|\bsnu|!\, 
  \prod_{j=1}^s (S_L\,\beta_j)^{\nu_j}.
  \qquad\qquad\qquad\qquad\quad
\end{align}

\item [\textnormal{(b)}] The periodic DNN defined in
    \eqref{KKNS:eq:DNN-per} satisfies
\begin{align} \label{KKNS:eq:reg-per}
  |\partial^\bsnu G_\theta^{[L]}(\bsy)_p|
  \le C_L\, (2\pi)^{|\bsnu|}
  \sum_{\bsm\le\bsnu} |\bsm|!\,
  \prod_{j=1}^s \Big((S_L\,\beta_j)^{m_j}\,\calS(\nu_j,m_j) \Big).
\end{align}
\end{itemize}
In both cases we define 
\begin{align} \label{KKNS:eq:CSP}
 C_L := \max\bigg\{
 \|G^{[L]}_\theta\|_{\infty},\;
 \frac{P_L}{S_L} \bigg\},
  \quad
  S_L := \tau \sum_{\ell=0}^{L-1} P_\ell, \quad
  P_\ell := \prod_{t=1}^\ell (\xi\,\tau\, R_t).
\end{align}
\end{theorem}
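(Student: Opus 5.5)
Induction on the depth $L$ is the natural route, using the recursive structure of the network. Write the hidden pre-activation vectors as $\bsz^{[0]}(\bsy) := W_0\bsy+\bsv_0$ (or $W_0\sin(2\pi\bsy)+\bsv_0$ in the periodic case) and $\bsz^{[\ell]} := W_\ell\,\sigma(\bsz^{[\ell-1]})+\bsv_\ell$, so that $G_\theta^{[L]} = \bsz^{[L]}$. The output is affine in $\sigma(\bsz^{[L-1]})$. The key object is therefore a bound on $\partial^\bsnu \sigma(\bsz^{[\ell]})_q$ uniform in the component $q$. The claim to prove inductively in $\ell$ is that for $\bsnu\ne\bszero$
\begin{align*}
  \big|\partial^\bsnu \bsz^{[\ell]}(\bsy)_q\big| \le \frac{P_\ell}{S_{\ell+1}}\,|\bsnu|!\,\prod_{j}(S_{\ell+1}\beta_j)^{\nu_j}
\end{align*}
in the non-periodic case. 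Here the convention is $P_0=1$, so $S_1=\tau$. The base case $\ell=0$ is immediate: $\bsz^{[0]}$ is affine, so only $|\bsnu|=1$ survives, and the bound $\beta_j$ matches $\frac{1}{\tau}\cdot 1\cdot \tau\beta_j$. The case $\bsnu=\bszero$ is covered separately by the $\|G_\theta^{[L]}\|_\infty$ entry of $C_L$. For the output layer, applying $W_L$ with $\|W_L\|_\infty\le R_L$ (including $L=0$-style conventions) gives the factor $P_L/S_L$ after matching constants.

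The inductive step is the multivariate Faà di Bruno formula for the composition $\sigma\circ \bsz^{[\ell-1]}_q$. In its set-partition form, it expresses $\partial^\bsnu \sigma(g)$ as a sum over $k=1,\ldots,|\bsnu|$ of $\sigma^{(k)}(g)$ times sums over partitions of the multiset of derivatives into $k$ blocks $\bsm_1+\cdots+\bsm_k=\bsnu$ of $\prod_i \partial^{\bsm_i} g$. Insert $|\sigma^{(k)}|\le \xi\tau^k k!$ and the induction hypothesis $|\partial^{\bsm_i}g|\le \frac{P_{\ell-1}}{S_\ell}|\bsm_i|!\prod_j (S_\ell\beta_j)^{m_{i,j}}$. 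The product over $j$ factors out as $\prod_j (S_\ell\beta_j)^{\nu_j}$. What remains is the purely combinatorial sum
\begin{align*}
 \sum_{k} \xi\,k!\,\Big(\tau\frac{P_{\ell-1}}{S_\ell}\Big)^k \sum_{\text{partitions into }k\text{ blocks}} \prod_i |\bsm_i|!.
\end{align*}
I would reduce this to the univariate identity $\sum_{\pi} \prod_{B\in\pi}|B|! = \binom{n-1}{k-1}\frac{n!}{k!}$ (Lah numbers), with $n=|\bsnu|$. This gives $n!\,\xi\sum_{k}\binom{n-1}{k-1}a^k$ with $a=\tau P_{\ell-1}/S_\ell$, which equals $n!\,\xi\, a(1+a)^{n-1}$. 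Multiplying by $R_\ell$ from $W_\ell$ and using $S_{\ell+1} = S_\ell+\tau P_\ell$, one checks that $\xi R_\ell a(1+a)^{n-1}S_\ell^n = (P_\ell/S_\ell)\,S_\ell^{n}(1+\tau P_{\ell-1}/S_\ell)^{n-1}$. The intended closing step is to rewrite this as $(P_\ell/S_{\ell+1})\,S_{\ell+1}^{n}$ after the index bookkeeping. I expect the main obstacle to be verifying that the constants telescope exactly in this way. Concretely, the difficulty is getting the correct pairing between $P_{\ell-1}$ and $P_\ell$, since $\xi\tau R_\ell P_{\ell-1}=P_\ell$. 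I would first try to adjust the form of the hypothesis, for instance to $\frac{P_\ell}{S_{\ell+1}}$ versus $\frac{P_{\ell}}{\tau P_\ell}$, until the recursion $(S_\ell+\tau P_\ell)$ appears cleanly.

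For part (b) I would reduce to part (a). The periodic network equals the non-periodic network evaluated at $\bsy\mapsto\sin(2\pi\bsy)$, which acts coordinatewise. The univariate Faà di Bruno formula then gives, per coordinate, $\partial_{y_j}^{\nu_j}F(\sin 2\pi y_j) = \sum_{m_j}(\partial^{m_j}F)\cdot B_{\nu_j,m_j}(\ldots)$, where the Bell polynomial is evaluated at derivatives of $\sin(2\pi y)$, each bounded by $(2\pi)^r$. Since $B_{\nu,m}$ evaluated at all ones equals $\calS(\nu,m)$, each coordinate yields $(2\pi)^{\nu_j}\calS(\nu_j,m_j)$. Combining this with (a) at multiindex $\bsm$ gives the claimed sum over $\bsm\le\bsnu$.
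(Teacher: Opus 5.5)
Your strategy is the right one and follows the cited proof: induction over the layers, the set-partition form of Fa\`a di Bruno for $\sigma\circ\bsz^{[\ell-1]}_q$, the Lah identity $\sum_{|\pi|=k}\prod_{B\in\pi}|B|!=\binom{n-1}{k-1}\frac{n!}{k!}$, and for (b) composition with $\sin(2\pi\cdot)$ coordinatewise, using Bell polynomials and $\calS(\nu,m)$. However, the inductive hypothesis you wrote is off by one layer, and with it the induction does not close. Insert your level-$(\ell-1)$ bound $\frac{P_{\ell-1}}{S_\ell}|\bsm|!\prod_j(S_\ell\beta_j)^{m_j}$ into the Lah computation and multiply by $R_\ell$. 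This gives
\[
 |\partial^\bsnu \bsz^{[\ell]}_q| \le P_\ell\,(S_\ell+\tau P_{\ell-1})^{n-1}\,n!\prod_j\beta_j^{\nu_j}.
\]
Your level-$\ell$ claim, however, is $P_\ell\,S_{\ell+1}^{n-1}\,n!\prod_j\beta_j^{\nu_j}$ with $S_{\ell+1}=S_\ell+\tau P_\ell$. So the intended closing step $S_\ell(1+\tau P_{\ell-1}/S_\ell)=S_{\ell+1}$ is false. The needed inequality holds only if $P_{\ell-1}\le P_\ell$, i.e.\ $\xi\tau R_\ell\ge 1$, which is not assumed. Even if it closed, at $\ell=L$ it would give $P_L S_{L+1}^{n-1}$ instead of the theorem's $P_L S_L^{n-1}$, which is weaker for $n\ge 2$. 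There is also no further output layer left to ``match constants'': $W_L$ and $\bsv_L$ are already inside $\bsz^{[L]}$. In fact your displayed bound, multiplied by $\xi\tau$, is the correct bound for $\sigma(\bsz^{[\ell]}_q)$, not for $\bsz^{[\ell]}_q$; that is where the shift crept in.

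The fix is to index the hypothesis at the same level. For $\bsnu\ne\bszero$, claim
\[
 |\partial^\bsnu \bsz^{[\ell]}_q| \le P_\ell\,S_\ell^{|\bsnu|-1}\,|\bsnu|!\prod_j\beta_j^{\nu_j}
 = \frac{P_\ell}{S_\ell}\,|\bsnu|!\prod_j(S_\ell\beta_j)^{\nu_j},
\]
with $S_0:=0$ and $0^0:=1$. Then $\ell=0$ is exactly the affine base case: the bound is $\beta_j$ for $|\bsnu|=1$ and $0$ otherwise. Your Fa\`a di Bruno/Lah computation (over set partitions of the $n$ labelled first-order derivative slots) then gives
\[
 |\partial^\bsnu\sigma(\bsz^{[\ell-1]}_q)| \le \xi\,n!\prod_j\beta_j^{\nu_j}\sum_{k=1}^n\binom{n-1}{k-1}(\tau P_{\ell-1})^k S_{\ell-1}^{n-k}
 = \xi\tau P_{\ell-1}\,(S_{\ell-1}+\tau P_{\ell-1})^{n-1}\,n!\prod_j\beta_j^{\nu_j}.
\]
The recursion that appears is $S_{\ell-1}+\tau P_{\ell-1}=S_\ell$ (not $S_\ell+\tau P_\ell$), and $R_\ell\,\xi\tau P_{\ell-1}=P_\ell$. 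So the level-$\ell$ hypothesis follows exactly, and the case $\ell=L$ is part (a) for $\bsnu\ne\bszero$.

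Your reduction of (b) to (a) is fine, with two small checks. First, the bound from (a) for $\bsm\ne\bszero$ is applied at the points $\sin(2\pi\bsy)\in[-1,1]^s$; this is legitimate because the inductive bounds are uniform in the input. Second, the $\bsm=\bszero$ term occurs only when $\bsnu=\bszero$, since $\calS(\nu,0)=\delta_{\nu,0}$, and there $C_L\ge\|G^{[L]}_\theta\|_\infty$ is used directly.
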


We can restrict the network parameters of the DNNs to match the regularity
features of the target functions and then obtain a theoretical bound on the norms in \eqref{KKNS:eq:start}.
Regularity bounds of the form \eqref{KKNS:eq:tar-np} for target functions have appeared in PDE applications in e.g., \cite{GGKSS19,GKNSSS15,GKNSS18b,GKKSS24,KN16,KSS12},
while bounds of the form \eqref{KKNS:eq:tar-per} have appeared in e.g., \cite{DKLNS14,HHKKS24,KKKNS22,KKS20,KKS24}. In these PDE applications, a sequence $b_j$ captures the decaying magnitude of the random field fluctuations corresponding to the $j$th basis function as $j$ increases. Correspondingly, this sequence $b_j$ reflects the decaying importance of the variables $y_j$ in our weighted function spaces.

We emphasize that the bounds in Theorem~\ref{KKNS:thm:reg} for the DNNs are completely general. If the target function is not from the PDE applications mentioned above, then one should first obtain a regularity bound for the target function and then devise a possibly different strategy to restrict the network parameters to match the regularity features of the target function.

\begin{theorem}[{{\cite[Theorem~3.2]{KKNS25a}}}] \label{KKNS:thm:diff}
Let the sequences $(\beta_j)_{j\ge 1}$, $(R_\ell)_{\ell\ge 1}$,
$(A_n)_{n\ge 1}$ be defined as in \eqref{KKNS:eq:beta}, \eqref{KKNS:eq:R},
\eqref{KKNS:eq:sigma}, respectively, with $A_n$ given by \eqref{KKNS:eq:common} in terms of parameters $\xi,\tau>0$. 
For depth $L\ge 1$, let $C_L$ and $S_L$ be defined as in \eqref{KKNS:eq:CSP}.
Given a sequence $(b_j)_{j\ge 1}$ and a constant $C>0$, we 
\begin{itemize}
\item restrict the elements of the matrices $W_1,\ldots,W_{L-1}$ so
    that for some constant $\rho>0$,
\begin{align} \label{KKNS:eq:demand1}
 & R_\ell \le \rho
  \quad\mbox{for all $\ell= 1, \ldots,L-1$},
 \\
 \label{KKNS:eq:demand1b}
 \Big( \mbox{thus}\;\; S_L \le \tau\,L \;\;\mbox{if}
 &\;\; \xi\,\tau\,\rho = 1, \;\;\mbox{and}\;\;
 S_L \le \tau\frac{(\xi\,\tau\,\rho)^L-1}{\xi\,\tau\,\rho-1}
 \;\;\mbox{otherwise} \Big),
\end{align}
\item restrict the elements of the matrix $W_0$ so that
\begin{align} \label{KKNS:eq:demand2}
  \beta_j \le \frac{b_j}{S_L}
  \quad\mbox{for all $j=1,\ldots,s$},
\end{align}
\item restrict the matrices $W_1,\ldots, W_L$ and vector $\bsv_L$ so that
\begin{align} \label{KKNS:eq:demand3}
 C_L \le C.
\end{align}
\end{itemize}
\begin{itemize}
\item [\textnormal{(a)}] Suppose a non-periodic target function
    $G(\bsy)$ satisfies the regularity bound: for all multiindices
    $\bsnu\in\calI$ and all components $1\le p\le N_{\rm obs}$,
\begin{align} \label{KKNS:eq:tar-np}
    |\partial^\bsnu G(\bsy)_p| \le C\,|\bsnu|!\, \prod_{j=1}^s b_j^{\nu_j}.
\end{align}
Then the non-periodic DNN \eqref{KKNS:eq:DNN-np} with restrictions
\eqref{KKNS:eq:demand1}--\eqref{KKNS:eq:demand3} satisfies the same regularity
bound \eqref{KKNS:eq:tar-np}, and for the norm \eqref{KKNS:eq:norm-np} we have for all components $1\le p\le N_{\rm obs}$,
\begin{align}
 \big\|\big(G(\cdot)_p- G^{[L]}_\theta(\cdot)_p\big)^2\big\|_{\calW_{1,\bsgamma}}^2
 \le 16\,C^4
 \sum_{\setu\subseteq\{1:s\}} \frac{1}{\gamma_\setu} \bigg(
 (|\setu|+1)!\, \prod_{j\in\setu} b_j
 \bigg)^2. \label{KKNS:eq:final-np}
\end{align}

\item [\textnormal{(b)}] Suppose a periodic target function $G(\bsy)$
    satisfies the regularity bound: for all mutiindices
    $\bsnu\in\calI$ and all components $1\le p\le N_{\rm obs}$,
\begin{align} \label{KKNS:eq:tar-per}
  |\partial^\bsnu G(\bsy)_p|
  \le C\,(2\pi)^{|\bsnu|} \sum_{\bsm\le\bsnu} |\bsm|!\,
  \prod_{j=1}^s \Big(b_j^{m_j}\,\calS(\nu_j,m_j)\Big).
\end{align}
Then the periodic DNN \eqref{KKNS:eq:DNN-per} with restrictions
\eqref{KKNS:eq:demand1}--\eqref{KKNS:eq:demand3} satisfies the same regularity
bound \eqref{KKNS:eq:tar-per}, and for the norm \eqref{KKNS:eq:norm-per} we have for all components $1\le p\le N_{\rm obs}$,
\begin{align}
 &\big\|\big(G(\cdot)_p - G^{[L]}_\theta(\cdot)_p\big)^2\big\|_{\calW_{\alpha,\bsgamma}}^2 \nonumber\\
 &\le 16\,C^4\!\!
 \sum_{\setu\subseteq\{1:s\}} \!\!\!\!
 \frac{(2\pi)^{2\alpha|\setu|}}{\gamma_\setu} \bigg(
 \sum_{\bsm_\setu\le\bsalpha_\setu} (|\bsm_\setu|+1)!\,
 \prod_{j\in\setu} \Big(b_j^{m_j}\,\calS(\alpha,m_j)\Big)
 \bigg)^2, \label{KKNS:eq:final-per}
\end{align}
where $\bsalpha_\setu$ denotes a multiindex with $|\setu|$ components all equal to $\alpha$.

\item [\textnormal{(c)}] Continuing from \textnormal{(b)}, for the norm
    \eqref{KKNS:eq:norm-K} we have for all components
$1\le p\le N_{\rm obs}$,
\begin{align}
 &\big\|\big(G(\cdot)_p - G^{[L]}_\theta(\cdot)_p\big)^2\big\|_{\calK_{\alpha,\bsgamma}} \nonumber \\
 &\le 4\,C^2
 \max_{\setu\subseteq\{1:s\}} \!\!
 \frac{(2\pi)^{\alpha|\setu|}}{\gamma_\setu} \!\!
 \sum_{\bsm_\setu\le\bsalpha_\setu} (|\bsm_\setu|+1)!\,
 \prod_{j\in\setu} \Big(b_j^{m_j}\,\calS(\alpha,m_j) \Big).
 \label{KKNS:eq:final-K}
\end{align}
\end{itemize}
\end{theorem}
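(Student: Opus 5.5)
The plan is to first show that, under the restrictions \eqref{KKNS:eq:demand1}--\eqref{KKNS:eq:demand3}, the DNN inherits the regularity bound of the target function. The norm estimates then follow from a Leibniz-rule computation for the squared difference.

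\textbf{Step 1: the DNN satisfies the target's regularity bound.} For (a), start from Theorem~\ref{KKNS:thm:reg}(a). Restriction \eqref{KKNS:eq:demand2} gives $S_L\beta_j\le b_j$, and restriction \eqref{KKNS:eq:demand3} gives $C_L\le C$. Substituting into \eqref{KKNS:eq:reg-np} yields $|\partial^\bsnu G^{[L]}_\theta(\bsy)_p|\le C\,|\bsnu|!\prod_j b_j^{\nu_j}$, which is \eqref{KKNS:eq:tar-np}. Part (b) is identical, starting from \eqref{KKNS:eq:reg-per}: since $\calS(\nu_j,m_j)\ge0$, each term of the sum is monotone in $S_L\beta_j$. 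Restriction \eqref{KKNS:eq:demand1} is only needed to make $S_L$ (and so the bound on $\beta_j$) explicit; the bound \eqref{KKNS:eq:demand1b} follows by summing the geometric series $P_\ell\le(\xi\tau\rho)^\ell$. Note that $P_\ell$ involves only $R_1,\dots,R_\ell$ with $\ell\le L-1$. Writing $D:=G_p-G^{[L]}_{\theta,p}$, the triangle inequality then gives $D$ the same bound with $C$ replaced by $2C$.

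\textbf{Step 2: derivative bound for $D^2$.} Apply the Leibniz rule:
\[
\partial^\bsnu(D^2)=\sum_{\bsm\le\bsnu}\binom{\bsnu}{\bsm}\partial^\bsm D\,\partial^{\bsnu-\bsm}D .
\]
In the non-periodic case, restrict to $\bsnu$ with entries in $\{0,1\}$ supported on $\setu$. The estimate becomes
\[
4C^2\prod_{j\in\setu}b_j\sum_{\bsm\le\bsnu}|\bsm|!\,(|\bsnu|-|\bsm|)! .
\]
Using $\sum_{k=0}^n\binom nk k!(n-k)!=(n+1)!$, this equals $4C^2(|\setu|+1)!\prod_{j\in\setu}b_j$. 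Squaring this pointwise bound, inserting it into \eqref{KKNS:eq:norm-np} (the inner and outer integrals over the unit cube contribute at most the squared sup), and summing over $\setu$ gives \eqref{KKNS:eq:final-np} with constant $16C^4$.

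\textbf{Step 3: the periodic cases (the main obstacle).} Here $\bsnu=\bsalpha_\setu$, and the task is to show
\[
\sum_{\bsm\le\bsnu}\binom{\bsnu}{\bsm}\,T(\bsm)\,T(\bsnu-\bsm)\le(2\pi)^{|\bsnu|}\sum_{\bsk\le\bsnu}(|\bsk|+1)!\prod_j b_j^{k_j}\calS(\nu_j,k_j),
\]
where $T(\bsnu):=(2\pi)^{|\bsnu|}\sum_{\bsk\le\bsnu}|\bsk|!\prod_j b_j^{k_j}\calS(\nu_j,k_j)$. The $(2\pi)$ powers factor out. The remaining combinatorial core is a convolution identity for Stirling numbers:
\[
\sum_{i}\binom{n}{i}\calS(i,a)\,\calS(n-i,c)=\binom{a+c}{a}\calS(n,a+c).
\]
This holds because choosing an $i$-subset, partitioning it into $a$ blocks and its complement into $c$ blocks, is the same as partitioning $\{1,\dots,n\}$ into $a+c$ blocks and choosing which $a$ blocks come first. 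Using it coordinatewise, I regroup by $\bsk=\bsk'+\bsk''$. The factorial sum becomes
\[
\sum_{\bsk'+\bsk''=\bsk}|\bsk'|!\,|\bsk''|!\prod_j\binom{k_j}{k'_j},
\]
and the Vandermonde-type collapse $\sum_{\bsk'\le\bsk}\prod_j\binom{k_j}{k'_j}=\sum_{t}\binom{|\bsk|}{t}$ turns this into $\sum_t\binom{|\bsk|}{t}t!(|\bsk|-t)!=(|\bsk|+1)!$. Checking this bookkeeping carefully is where I expect the work to lie.

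With the pointwise bound $4C^2(2\pi)^{\alpha|\setu|}\sum_{\bsm_\setu\le\bsalpha_\setu}(|\bsm_\setu|+1)!\prod_{j\in\setu}b_j^{m_j}\calS(\alpha,m_j)$ established, the two periodic estimates follow directly. Substituting its square into \eqref{KKNS:eq:norm-per} gives \eqref{KKNS:eq:final-per}. Substituting the bound itself into \eqref{KKNS:eq:norm-K}, again bounding the integrals by the sup, gives \eqref{KKNS:eq:final-K}.
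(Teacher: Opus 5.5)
Your proposal is correct. You transfer the target's regularity to the DNN through Theorem~\ref{KKNS:thm:reg} with \eqref{KKNS:eq:demand2}--\eqref{KKNS:eq:demand3}, then bound the mixed derivatives of $(G_p-G^{[L]}_{\theta,p})^2$ by the Leibniz rule and $\sum_{k=0}^n\binom{n}{k}k!\,(n-k)!=(n+1)!$. In the periodic case you add the Stirling convolution $\sum_i\binom{n}{i}\calS(i,a)\,\calS(n-i,c)=\binom{a+c}{a}\calS(n,a+c)$ and Vandermonde; your index-range bookkeeping via $\calS(n,k)=0$ for $k>n$ is sound and the combinatorial step holds with equality. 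Together these give exactly the pointwise bounds behind \eqref{KKNS:eq:final-np}--\eqref{KKNS:eq:final-K}.

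The survey itself does not prove this theorem; it quotes it from the original paper, so there is no in-text proof to compare against. Your Leibniz-plus-identities argument is the natural route and, as far as I can tell, essentially the one taken there.
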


The sequence $b_j$  appears in the norm bounds \eqref{KKNS:eq:final-np}, \eqref{KKNS:eq:final-per}, \eqref{KKNS:eq:final-K} and directly informs our choice of weights $\gamma_\setu$  in the theorem below. Crucially, while larger weights $\gamma_\setu$  reduce these norm bounds, they simultaneously increase the worst case error bounds \eqref{KKNS:eq:wce-np}, \eqref{KKNS:eq:wce-per}, \eqref{KKNS:eq:wce-K}. Since the generalization error is bounded by the product of these factors (see \eqref{KKNS:eq:start}), we optimize $\gamma_\setu$  to balance this trade-off. This framework enables the construction of lattice generating vectors achieving  convergence rates independent of the input dimension~$s$ across all three settings.

While the input dimension $s$ is fixed throughout, we consider an infinite sequence $b_j$ which is summable in order to have the constant in our error bound independent of~$s$. The ``summability exponent'' defined below then determines the convergence rates.
Weights of the form \eqref{KKNS:eq:weight-np} are referred to as POD (``product and order dependent'') weights, while weights of the form \eqref{KKNS:eq:weight-per} and \eqref{KKNS:eq:weight-K} are referred to as SPOD (``smoothness-driven product and order dependent'') weights.

\begin{theorem}[{{\cite[Theorem~3.3]{KKNS25a}}}] \label{KKNS:thm:err}
Under the same conditions as in Theorem~\ref{KKNS:thm:diff} and for the respective settings \textnormal{(a), (b), (c)}, assume further
that there is a ``summability exponent'' $p^* \in (0,1)$ such that
$\sum_{j\ge 1} b_j^{p^*} < \infty$.
\begin{itemize}
\item [\textnormal{(a)}] Construct as in Section~\ref{KKNS:sec:qmc} setting \textnormal{(a)} a generating vector $\bsz$ for a randomly-shifted lattice rule in weighted Sobolev space with weights
\begin{align} \label{KKNS:eq:weight-np}
 \gamma_\setu := \bigg(
 (|\setu|+1)!
 \prod_{j\in\setu} \!\bigg(\!
 \sqrt{\frac{(2\pi)^{2\lambda}}{2\zeta(2\lambda)\,2^\lambda}}\, b_j\bigg)
 \bigg)^{\frac{2}{1+\lambda}},
 \,
 \lambda :=
 \begin{cases}
 \frac{1}{2-2\delta}, \delta\in (0,\frac{1}{2}), \!\!\!\!\!
 &\mbox{if } p^* \in (0, \frac{2}{3}], \\
 \frac{p^*}{2-p^*} 
 &\mbox{if } p^* \in (\frac{2}{3}, 1).
 \end{cases}
\end{align}
If a non-periodic DNN \eqref{KKNS:eq:DNN-np} is trained using these lattice
points, and the training error reaches the threshold ${\tt tol} \in
(0,1)$, then the generalization error satisfies
\begin{align} \label{KKNS:eq:rate-np}
  \calE_G 
  \le {\tt tol} + \calO\big(N^{-r/2}\big),
  \qquad
  r := \min\Big(1-\delta,\;\frac{1}{p^*}-\frac{1}{2}\Big),
  \quad \delta\in (0,\tfrac{1}{2}).
\end{align}

\item [\textnormal{(b)}] Construct as in Section~\ref{KKNS:sec:qmc} setting \textnormal{(b)} a generating vector $\bsz$ for a lattice rule in weighted Hilbert Korobov space with weights
\begin{align} \label{KKNS:eq:weight-per}
 \gamma_\setu :=
 (2\pi)^{2\alpha|\setu|} \!\!\!\!\!
 \sum_{\bsm_\setu\le\bsalpha_\setu} \!\!\!
 \bigg(
 (|\bsm_\setu|+1)!
 \prod_{j\in\setu} \!\!
 \frac{b_j^{m_j}\calS(\alpha,m_j)}{\sqrt{2\zeta(2\alpha\lambda)}}
 \bigg)^{\frac{2}{1+\lambda}},
 \,
 \alpha := \left\lfloor \tfrac{1}{p^*} + \tfrac{1}{2} \right\rfloor,
 \,
 \lambda := \tfrac{p^*}{2-p^*}.
\end{align}
If a periodic DNN \eqref{KKNS:eq:DNN-per} is trained using these lattice
points, and the training error reaches the threshold ${\tt tol} \in
(0,1)$, then the generalization error satisfies
\begin{align} \label{KKNS:eq:rate-per}
  \calE_G 
  \le {\tt tol} + \calO\big(N^{-r/2}\big),
  \qquad
  r := \frac{1}{p^*}-\frac{1}{2}.
\end{align}

\item [\textnormal{(c)}] Construct as in Section~\ref{KKNS:sec:qmc} setting \textnormal{(c)} a generating vector $\bsz$ for a lattice rule in weighted non-Hilbert Korobov space with weights
\begin{align} \label{KKNS:eq:weight-K}
 \gamma_\setu :=
 (2\pi)^{\alpha|\setu|}
 \sum_{\bsm_\setu\le\bsalpha_\setu}
 (|\bsm_\setu|+1)!\,
 \prod_{j\in\setu} \big( b_j^{m_j}\calS(\alpha,m_j)\big), \quad
 \alpha := \left\lfloor \tfrac{1}{p^*} \right\rfloor + 1.
\end{align}
If a periodic DNN \eqref{KKNS:eq:DNN-per} is trained using these lattice
points, and the training error reaches the threshold ${\tt tol} \in
(0,1)$, then the generalization error satisfies
\begin{align} \label{KKNS:eq:rate-K}
  \calE_G 
  \le {\tt tol} + \calO\big(N^{-r/2}\big),
  \qquad
  r := \frac{1}{p^*}.
\end{align}
\end{itemize}
In all three cases the implied constant in the big-$\calO$ bound is
independent of $s$.

If the condition \eqref{KKNS:eq:demand2} is not achieved after
training, i.e., if $\beta_j \le b_j/S_L$ is not true for some $j$, with $S_L$ defined in \eqref{KKNS:eq:CSP},
but instead there is a constant $\kappa\ge 1/s_L$ such that
\begin{align} \label{KKNS:eq:kappa}
  \beta_j \le \kappa\,b_j
  \quad\mbox{for all } j = 1,\ldots,s,
\end{align}
then the bounds \eqref{KKNS:eq:final-np}, \eqref{KKNS:eq:final-per},
\eqref{KKNS:eq:final-K} hold with each $b_j$ replaced by $\kappa\,S_L\,b_j$.
Even though the lattice training points were constructed with the weights
\eqref{KKNS:eq:weight-np}, \eqref{KKNS:eq:weight-per}, \eqref{KKNS:eq:weight-K} based on
the original $b_j$, the big-$\calO$ bounds \eqref{KKNS:eq:rate-np},
\eqref{KKNS:eq:rate-per}, \eqref{KKNS:eq:rate-K} hold now but with enlarged implied
constants that depend on $\kappa\,S_L$. In particular, the enlarged constants for \eqref{KKNS:eq:rate-np} and \eqref{KKNS:eq:rate-per} are still bounded independently of $s$, but the enlarged constant for \eqref{KKNS:eq:rate-K} grows exponentially with~$s$.
\end{theorem}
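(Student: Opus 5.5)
The plan is to combine the three ingredients already in place: the splitting \eqref{KKNS:eq:EG}, the generalization-gap bound \eqref{KKNS:eq:start}, and the norm bounds of Theorem~\ref{KKNS:thm:diff}, together with the worst case error bounds \eqref{KKNS:eq:wce-np}, \eqref{KKNS:eq:wce-per}, \eqref{KKNS:eq:wce-K}.

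\emph{Reduction.} Since $\calE_T\le{\tt tol}$, we have $\calE_G\le {\tt tol}+\sqrt{|\calE_G^2-\calE_T^2|}$. It therefore suffices to show that $|\calE_G^2-\calE_T^2|=\calO(N^{-r})$ with an $s$-independent constant. By \eqref{KKNS:eq:start} this quantity is at most $N_{\rm obs}\,e^{\rm wor}_N(\bsz)\,\max_p\|(G_p-G^{[L]}_{\theta,p})^2\|_\calW$. In setting (a) the estimate holds in r.m.s.\ over the shift, so there the statement is understood in that sense. The whole proof thus reduces to bounding the product $e^{\rm wor}_N(\bsz)\cdot\|\cdot\|_\calW$ via Theorem~\ref{KKNS:thm:diff}.

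\emph{Settings (a) and (b).} Write the norm bound as $\big(\sum_\setu \Lambda_\setu^2/\gamma_\setu\big)^{1/2}$. For (a), $\Lambda_\setu=4C^2(|\setu|+1)!\prod_{j\in\setu}b_j$; for (b), $\Lambda_\setu$ is the analogous SPOD sum times $(2\pi)^{\alpha|\setu|}$. The product to control is
\[
\Big(\tfrac{2}{N}\textstyle\sum_\setu\gamma_\setu^\lambda\rho^{|\setu|}\Big)^{1/(2\lambda)}\Big(\textstyle\sum_\setu\Lambda_\setu^2/\gamma_\setu\Big)^{1/2},
\]
where $\rho=\varrho_1(\lambda)2^\lambda$ or $\varrho_\alpha(\lambda)$. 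The standard minimization over $\gamma_\setu$ gives $\gamma_\setu=(\Lambda_\setu^2/\rho^{|\setu|})^{1/(1+\lambda)}$, which is exactly \eqref{KKNS:eq:weight-np}. In (b) I would instead bound the square of the SPOD sum by a sum of squares times a constant factor, using Jensen with exponent $2/(1+\lambda)\le 2$, which gives \eqref{KKNS:eq:weight-per}. With this choice both factors reduce to powers of
\[
\textstyle\sum_\setu (\Lambda_\setu^{2}\rho^{|\setu|})^{\lambda/(1+\lambda)},
\]
so the product is bounded by $N^{-1/(2\lambda)}$ times a power of this sum.

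\emph{Finiteness of the sum (main obstacle).} The key step is to show this sum is bounded independently of $s$. For (a) it is $\sum_{\setu}((|\setu|+1)!)^{2\lambda/(1+\lambda)}\prod_{j\in\setu}(c\,b_j)^{2\lambda/(1+\lambda)}$. Bounding the sum over $|\setu|=\ell$ by $\frac{1}{\ell!}(\sum_j (cb_j)^{q})^\ell$ with $q=2\lambda/(1+\lambda)$ gives a series $\sum_\ell ((\ell+1)!)^{q}\,(\cdots)^\ell/\ell!$. This converges when $q<1$, and requires $\sum_j b_j^q<\infty$, i.e.\ $q\ge p^*$. The choice of $\lambda$ in \eqref{KKNS:eq:weight-np} makes $q=p^*$ when $p^*>2/3$. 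When $p^*\le 2/3$, it takes $\lambda$ near $1/2$, giving $q<1$ and $q\ge p^*$. The resulting rate is $1/(2\lambda)$, which equals $\min(1-\delta,1/p^*-1/2)$, so $|\calE_G^2-\calE_T^2|=\calO(N^{-r})$. For (b), $\alpha=\lfloor 1/p^*+1/2\rfloor$ guarantees $\lambda=p^*/(2-p^*)>1/(2\alpha)$. The SPOD sum is handled by expanding over $\bsm_\setu$, grouping by $|\bsm_\setu|$, using that the Stirling numbers are bounded for fixed $\alpha$, and applying the same factorial-versus-$\ell!$ comparison. The condition that $\zeta(2\alpha\lambda)$ be finite is handled by the division by $\sqrt{2\zeta(2\alpha\lambda)}$ already built into the weights.

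\emph{Setting (c).} This is simpler because the norm is a maximum. Choosing $\gamma_\setu$ as in \eqref{KKNS:eq:weight-K} makes the norm bound \eqref{KKNS:eq:final-K} at most $4C^2$. The worst case error is then $(\tfrac2N\sum_\setu\gamma_\setu^\lambda\varrho_\alpha(\lambda/2)^{|\setu|})^{1/\lambda}$ with $\lambda\downarrow p^*$, which lies in $(1/\alpha,1]$ since $\alpha>1/p^*$. The sum is again bounded by the same order-dependent counting argument, now with exponent $\lambda=p^*$. This yields $N^{-1/p^*}$ up to an arbitrarily small loss, which is absorbed by taking $\lambda=p^*$ exactly where the sum still converges.

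\emph{The $\kappa$ remark.} If only $\beta_j\le\kappa b_j$ holds, Theorem~\ref{KKNS:thm:reg} gives regularity with $S_L\beta_j\le \kappa S_L b_j$, so the norm bounds hold with $b_j$ replaced by $\kappa S_L b_j$. The weights are unchanged, so only the norm factor changes. In (a) and (b) this multiplies each $\setu$-term by $(\kappa S_L)^{2|\setu|}$ (or $(\kappa S_L)^{2|\bsm_\setu|}$), and the product bound still reduces to a sum of the same type with enlarged constant $b_j$, which remains finite. In (c) the maximum over $\setu$ picks up a factor $(\kappa S_L)^{\alpha s}$, which grows exponentially in $s$.
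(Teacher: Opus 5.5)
Your proposal is correct and has the same overall structure as the paper's proof. You bound $|\calE_G^2-\calE_T^2|$ by the (r.m.s.) worst case error times the norm bound of Theorem~\ref{KKNS:thm:diff}, with $b_j$ replaced by $\kappa S_L b_j$. You then use the prescribed weights to balance the two factors and show the resulting constant is independent of $s$. Your treatment of setting (c) and of the $\kappa$ discussion is essentially the paper's.

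The genuine difference is in setting (a). The paper follows the classical POD argument:
\begin{itemize}
\item it applies H\"older's inequality with exponents $\frac{1+\lambda}{2\lambda}$ and $\frac{1+\lambda}{1-\lambda}$ and an auxiliary sequence $\alpha_j=b_j^{p^*}/\vartheta$;
\item it then uses the closed-form estimates $\sum_\setu(|\setu|+1)!\prod_{j\in\setu}\alpha_j\le(1-\sum_j\alpha_j)^{-2}$ and $\sum_\setu\prod_{j\in\setu}x_j\le\exp(\sum_j x_j)$.
\end{itemize}
You instead use the counting bound $\sum_{|\setu|=\ell}\prod_{j\in\setu}x_j\le(\sum_j x_j)^\ell/\ell!$ and the ratio test with $q=2\lambda/(1+\lambda)<1$. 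The paper uses that argument only in (b) and (c). Both routes give exactly the threshold $\lambda\ge p^*/(2-p^*)$. Yours handles all three settings with one lemma and needs no auxiliary sequence. The paper's route gives an explicit closed-form constant in (a).

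In (b), your ``sum of squares'' step is a cruder substitute for the paper's weighted Cauchy--Schwarz split $V_\setu(\bsm_\setu)=V_\setu(\bsm_\setu)^{1/(1+\lambda)}\,V_\setu(\bsm_\setu)^{\lambda/(1+\lambda)}$. That split cancels exactly against the weights \eqref{KKNS:eq:weight-per} and explains why they have that form. Your version also works, but two points should be stated explicitly:
\begin{itemize}
\item the counting factor is $\alpha^{|\setu|}$, not a constant; it is absorbed into $\widetilde b_j$ because $m_j\ge 1$ in every nonzero term;
\item the norm factor collapses only after the elementary inequality $\sum_{\bsm_\setu}V^2\le\big(\sum_{\bsm_\setu}V^{2/(1+\lambda)}\big)\big(\sum_{\bsm_\setu}V^{2\lambda/(1+\lambda)}\big)$.
\end{itemize}

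Some smaller points:
\begin{itemize}
\item For $p^*>2/3$, the rate $1/(2\lambda)=1/p^*-1/2$ is at least, not equal to, the stated minimum.
\item Finiteness of $\zeta(2\alpha\lambda)$ comes from $2\alpha\lambda>1$, not from the normalization built into the weights.
\item The claim that the enlarged constant in (c) grows exponentially needs the lower bound $(\kappa S_L)^{s}$ coming from $\setu=\{1:s\}$, where all $m_j\ge 1$. The upper bound $(\kappa S_L)^{\alpha s}$ alone is not enough.
\end{itemize}
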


The results are obtained by combining \eqref{KKNS:eq:final-np} with \eqref{KKNS:eq:wce-np}, \eqref{KKNS:eq:final-per}
with \eqref{KKNS:eq:wce-per}, and \eqref{KKNS:eq:final-K} with \eqref{KKNS:eq:wce-K}, and then selecting the weights
and other parameters, such as $\alpha$ and $\lambda$, in relation to $p^*$ to ensure that the resulting
constant remains finite in each case. The derivation of weights and convergence rates follows similar
reasoning to that presented in several recent works, including \cite{KN16,KSS12} for (a), \cite{KKKNS22}
for (b), and \cite{KKS20} for (c). Since the detailed derivations were omitted in \cite{KKNS25a}, we
provide the complete proof in the appendix for clarity and completeness.

Note the theoretical advantage of setting (c): the value of $r$ is larger, and hence yields faster convergence.

\subsection{Tailored regularization}

We train our DNN to obtain
\begin{align*}
  \theta^* := \argmin_{\theta\in\Theta}
  \bigg(
  \frac{1}{N} \sum_{k=1}^N \big\|
  G(\bsy_k) - G_{\theta}^{[L]}(\bsy_k) \big\|_2^2
  + \lambda\, \|\theta\|_2^2
  + \lambda_1\,\calR_1(\theta)
  \bigg).
\end{align*}
When $\lambda_1 = 0$ we have the
standard $\ell_2$ regularization. 
When $\lambda_1 >0$ we have the tailored
regularization from \cite{KKNS25a} which is designed on the basis of Theorem~\ref{KKNS:thm:diff}:
\begin{align}
  \calR_1(\theta) &:=
  \frac{1}{s} \sum_{j=1}^s \frac{1}{d_1} \sum_{p=1}^{d_1} \Big(W_{0,p,j}^2\,
  \frac{L^2}{b_j^2}\Big)^{m/2},
  \label{KKNS:eq:R1-def} 
\end{align}
with a positive even integer $m = 6$. In particular, $\calR_1(\theta)$ is chosen to ``encourage'' the condition~\eqref{KKNS:eq:demand2}, i.e., $\|W_{0,j}\|_\infty \le \beta_j \le b_j/S_L$ for all $j=1,\ldots,s$,
which is equivalent to
\begin{align} \label{KKNS:eq:from}
  \max_{1\le j\le s} \max_{1\le p\le d_1} |W_{0,p,j}|\, \frac{S_L}{b_j} \le 1.
\end{align}
To arrive at our design of the regularization term \eqref{KKNS:eq:R1-def} from \eqref{KKNS:eq:from}, we replace $S_L$ by $L$ to at least capture the fact that $S_L$ generally increases with increasing~$L$, and we estimate $\max_{1\le i\le n} |a_i| = \lim_{m\to\infty} (\frac{1}{n}\sum_{i=1}^n |a_i|^m)^{1/m}$ with just $m=6$, where the even power allows us to omit the absolute values.
The smoothness of $\calR_1(\theta)$ permits gradient computation $\nabla_\theta \calR_1(\theta)$ via automatic differentiation.

Numerical experiments in \cite{KKNS25a} showed that, for a periodic algebraic function and the sigmoid activation function, this tailored regularization is successful in encouraging the sequence $\beta_j$ to decay like~$b_j$, and as the theory predicts, this improved the generalization gap compared to just the standard $\ell_2$ regularization.

\section{Numerical experiments} \label{KKNS:sec:num}

In this study, we extend the experiments from \cite{KKNS25a} to explore different activation functions beyond the sigmoid function. Specifically, we examine the ${\rm swish}_c$ function with various parameters $c\in \{1, 5, 25\}$
and the ReLU function, which is not smooth and therefore falls outside the scope of the existing theory.
Notably, ${\rm swish}_c(x)$ converges to ${\rm ReLU}(x)$ as $c\to\infty$. 
Unlike the bounded sigmoid function, both ${\rm swish}_c(x)$ and ${\rm ReLU}(x)$ are unbounded as $x\to\infty$. 
Figure~\ref{KKNS:fig:swish} provides a visual comparison of these functions.
From Lemma~\ref{KKNS:lem:common} and \eqref{KKNS:eq:demand1}--\eqref{KKNS:eq:demand1b}, we see that if $R_\ell\le \rho$ for all $\ell=1,\ldots,L-1$ then with ${\rm swish}_c(x)$ we have 
\[
  S_L \le c \frac{(1.1\rho)^L-1}{1.1\rho-1}.
\]
Thus the factor $S_L$ appearing in \eqref{KKNS:eq:reg-np}--\eqref{KKNS:eq:reg-per} grows linearly with increasing $c$, which is then compounded by the product. We are interested to see whether this affects the performance in practice.

\begin{figure}[t]
\begin{center}
 \includegraphics[width=0.8\textwidth]{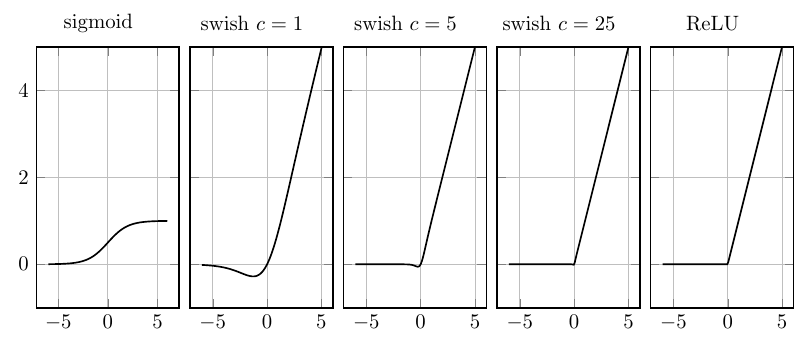}
 \caption{Graphs of different activation functions.}
 \label{KKNS:fig:swish}
\end{center}
\end{figure}

As in \cite{KKNS25a}, we investigate the periodic algebraic function
\begin{align*} 
  G(\bsy) := \frac{1}{a(\bsy)}, \quad
  a(\bsy) := 1 + \sum_{j=1}^s \sin(2\pi y_j)\,\psi_j,
  \quad \psi_j := \frac{\eta}{j^q} \mbox{ for } q>1,
\end{align*}
which satisfies the regularity bound
\eqref{KKNS:eq:tar-per} with $b_j = \psi_j/a_{\min}$ and $C = 1/a_{\min}$, 
where $a_{\min} := 1 - \eta\,\zeta(q)$. For the experiments, we select
$\eta = 0.5$ and $q = 2.5$. This corresponds to $p^* \approx 1/q = 1/2.5$ in Theorem~\ref{KKNS:thm:err}, giving the predicted convergence rate $r/2 \approx 0.5$, $1$, $1.25$ in
settings (a), (b), (c), respectively. 
We train a periodic DNN \eqref{KKNS:eq:DNN-per} with the five activation functions in Figure~\ref{KKNS:fig:swish} using the same two sets of hyperparameters as in \cite{KKNS25a}:
\begin{enumerate}
\item 
  $L = 3$,\;\; $N_{\rm obs} = 1$, $d_\ell = 32$, $s = 50$\; ($3777$ parameters);
\item 
  $L = 12$, $N_{\rm obs} = 1$, $d_\ell = 30$, $s = 50$\;  ($11791$ parameters).
\end{enumerate}
Although these networks may be considered small, recent studies have demonstrated the significant success of ``tiny" neural networks in computer graphics, see \cite{MESK22,MRNK21,VSWAEL23}.

The DNN is trained by running a full-batch Adam (adaptive moment estimation) optimization algorithm \cite{Adam}, which is a gradient descent method.
Our algorithm is implemented in the Python-based open-source deep learning framework PyTorch \cite{PyTorch}.
A random Glorot initialization determines
the initial parameter configuration and the training runs with
a learning rate (step size) of $10^{-4}$. The optimization process terminates either
when the training error $\calE_T = \sqrt{\calJ(\theta)}$ (see \eqref{KKNS:eq:ET}) reaches the
threshold ${\tt tol} = 10^{-3}$, or after completing a maximum number of epochs (steps)
of $40000$.

We employ a randomly shifted version of an ``off-the-shelf'' embedded lattice point set, constructed according to \cite{CKN06}, as our training points
$\{\bsy_k\}_{k\ge 1}$. The network is progressively trained using $N = 2^5, \ldots, 2^{12}$ points. We then compare the training error $\calE_T$ to an
estimate of the generalization error $\calE_G$ (see \eqref{KKNS:eq:EG}) provided
by
\[
  \calE_G
  \approx
  \widetilde{\calE}_G
  := \bigg(\frac{1}{M} \sum_{i=1}^M \big(G(\bst_i) - G_{\theta}^{[L]}(\bst_i) \big)^2 \bigg)^{1/2},
\]
where a different and independent random shift is used with the same embedded lattice rule, but with a much higher number of points $M = 2^{15} \gg N$. Finally, we compute an estimate of the generalization gap by
$|\widetilde{\calE}_G - \calE_T|$. We know from Theorem~\ref{KKNS:thm:err} that
\begin{align*}
  \calE_G
  \le \calE_T + |\calE_G - \calE_T|
  \le 10^{-3} + \calO\big(N^{-r/2}\big).
\end{align*}
When tailored regularization is applied, we aim to observe a consistently lower generalization error $\widetilde{\calE}_G$ and a faster convergence rate for the gap $|\widetilde{\calE}_G - \calE_T|$.

\begin{figure}[t]
\begin{center}
 \includegraphics[width=\textwidth]{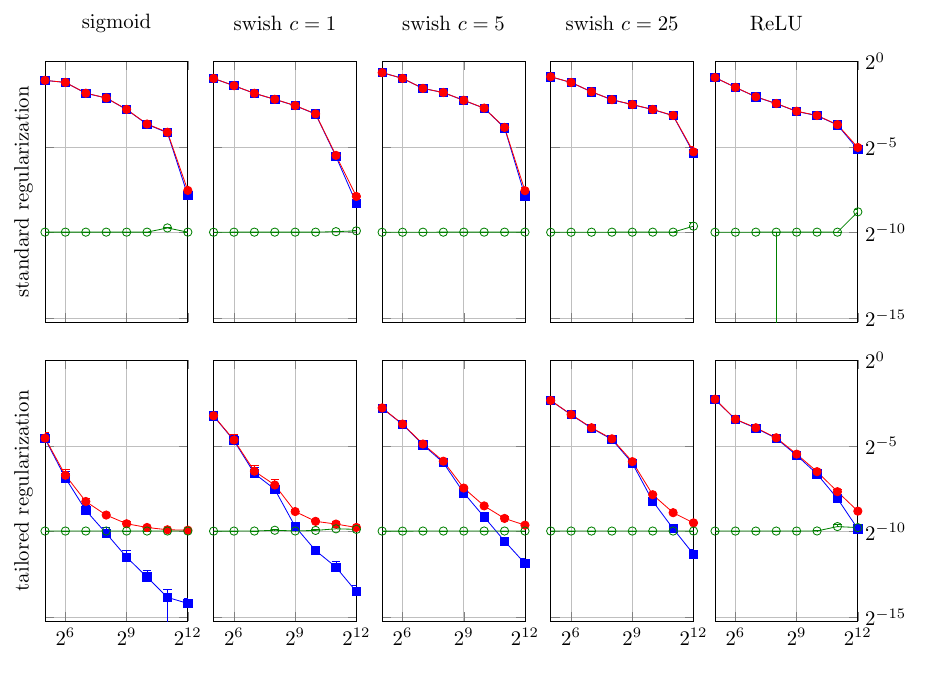}
 \caption{Values of $\calE_T$ (\textcolor{green!50!black}{$\circ$} green circles), $\widetilde{\calE}_G$ (\textcolor{red}{$\bullet$} red dots), and $|\widetilde{\calE}_G - \calE_T|$ (\textcolor{blue}{\scalebox{0.7}{$\blacksquare$}} blue squares) as $N$ increases, with 
 $L = 3$, $N_{\rm obs} = 1$, $d_\ell = 32$, $s = 50$.}
 \label{KKNS:fig:L=3}
\end{center}
\end{figure}

\begin{figure}[t]
\begin{center}
 \includegraphics[width=\textwidth]{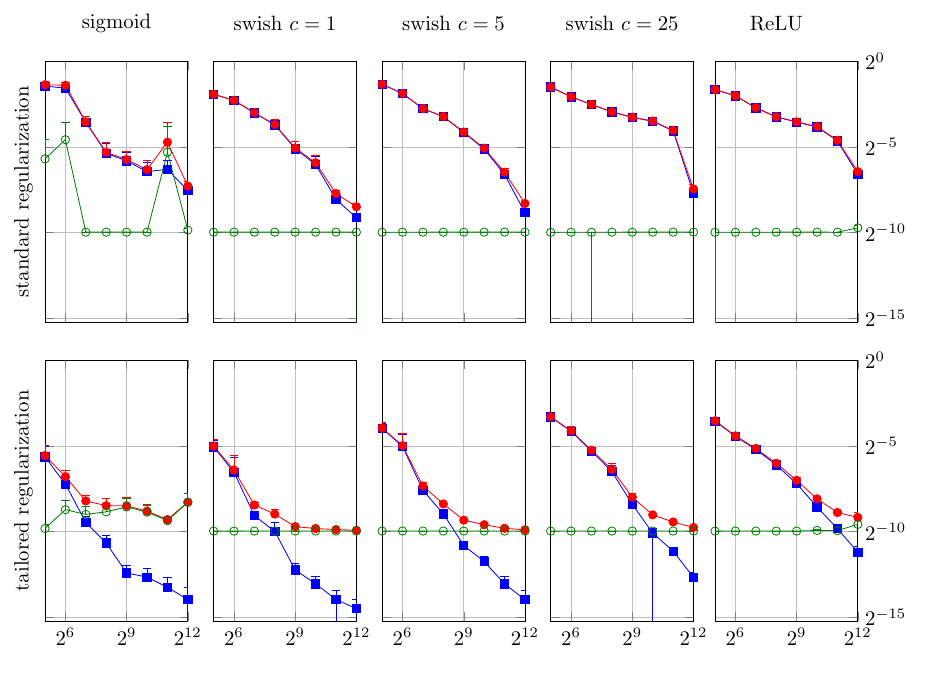}
 \caption{Values of $\calE_T$ (\textcolor{green!50!black}{$\circ$} green circles), $\widetilde{\calE}_G$ (\textcolor{red}{$\bullet$} red dots), and $|\widetilde{\calE}_G - \calE_T|$ (\textcolor{blue}{\scalebox{0.7}{$\blacksquare$}} blue squares) as $N$ increases, with 
 $L = 12$, $N_{\rm obs} = 1$, $d_\ell = 30$, $s = 50$.}
 \label{KKNS:fig:L=12}
\end{center}
\end{figure}

In Figure~\ref{KKNS:fig:L=3}, for the first set of hyperparameters ($L=3$), we plot 
the values of the training error $\calE_T$~(\textcolor{green!50!black}{$\circ$} green circles), the estimated generalization error $\widetilde{\calE}_G$~(\textcolor{red}{$\bullet$} red dots), and the generalization gap $|\widetilde{\calE}_G - \calE_T|$~(\textcolor{blue}{\scalebox{0.7}{$\blacksquare$}} blue squares) for $N = 2^5,\ldots,2^{12}$. Each of these quantities is averaged over $20$
different random Glorot initializations of the network and random shifts of the lattice rule. 
The top five graphs (labelled ``standard regularization'', corresponding to our five activation functions) were obtained with regularization parameter $\lambda = 10^{-8}$ but without our tailored regularization (so $\lambda_1= 0$). The bottom five graphs (labelled ``tailored regularization'') were obtained with $\lambda = 10^{-8}$ and our tailored regularization $\lambda_1 = 10^{-8}$. The values of $\lambda$ and $\lambda_1$ were chosen by trial and error to ensure consistently strong performance.
In Figure~\ref{KKNS:fig:L=12} we present the same plots for the second set of hyperparameters ($L=12$).

Across all five activation functions in Figures~\ref{KKNS:fig:L=3} and~\ref{KKNS:fig:L=12}, we observe that our tailored regularization consistently improves performance over standard $\ell_2$ regularization. With standard regularization, none of the estimated generalization errors $\widetilde{\calE}_G$~(\textcolor{red}{$\bullet$}) reaches our desired threshold ${\tt tol} = 10^{-3}$ at $N = 2^{12}$. 
In contrast, with tailored regularization, it is evident that the performance of swish with~$c \in \{1, 5, 25\}$ deteriorates as $c$ increases, approaching the performance of ReLU which is the worst among the five. This observation aligns  with our theoretical error bound, where the factor $S_L$ increases with~$c$, and ReLU's lack of smoothness means the theory does not apply.

In Figure~\ref{KKNS:fig:L=3} ($L=3$), the sigmoid activation function emerges as the winner. This case has been already presented in \cite[Figure~1]{KKNS25a}, where we observed that the estimated generalization gap $|\widetilde{\calE}_G - \calE_T|$~(\textcolor{blue}{\scalebox{0.7}{$\blacksquare$}}) decays at a rate between $\calO(N^{-1})$ and $\calO(N^{-2})$, and quickly drops below ${\tt tol} =10^{-3}$ by $N=2^9$, with $\widetilde{\calE}_G$~(\textcolor{red}{$\bullet$}) fairly close to ${\tt tol} = 10^{-3}$ from $N = 2^{10}$ onwards.

The winner in Figure~\ref{KKNS:fig:L=12} ($L=12$) is the standard swish activation function ($c=1$), where the gap $|\widetilde{\calE}_G - \calE_T|$~(\textcolor{blue}{\scalebox{0.7}{$\blacksquare$}}) decays at a rate between $\calO(N^{-1})$ and $\calO(N^{-2})$, and falls significantly below ${\tt tol} =10^{-3}$ already at $N=2^9$, with $\widetilde{\calE}_G$~(\textcolor{red}{$\bullet$}) fairly close to ${\tt tol} = 10^{-3}$ from $N = 2^9$. In fact, all variants of swish and ReLU perform better than sigmoid. As noted in \cite[Figure~2]{KKNS25a} for the sigmoid case the training error $\calE_T$~(\textcolor{green!50!black}{$\circ$}) cannot reach ${\tt tol} = 10^{-3}$. A strategy of gradually reducing the learning rate during training might enhance the performance of the sigmoid function in this scenario. However, it is important to note that the swish function consistently performs exceptionally well with a constant learning rate.

In this survey, we do not present results for standard Monte Carlo (MC) points. As demonstrated in \cite[Figures~1 and~2]{KKNS25a}, MC performs surprisingly similar to QMC. One possible explanation is that tailored regularization reduces the variance of $(G - G_\theta^{[L]})^2$. Also, as demonstrated in \cite[Figures~3 and~4]{KKNS25a}, the sequence $(\beta_j)$ with tailored regularization decays closely following the sequence $(b_j/L)$ for all activation functions; we do not repeat those plots here.

\section{Concluding remarks} \label{KKNS:sec:conc}

In this survey and in \cite{KKNS25a}, the focus is on scenarios where target
functions are smooth but involve numerous variables and are expensive to
evaluate. As outlined in \cite{ABDM24}, much of the
recent research on DNN approximation theory centers
on proving \emph{existence theorems} through \emph{emulation}. However, there
remains a notable gap between theoretical results and practical performance
(e.g., \cite{AD21,GV24}), motivating efforts to bridge this gap via
\emph{practical existence theory} (e.g., \cite{ABDM25}).
Our Theorem~\ref{KKNS:thm:err} presupposes the existence of a sequence $(b_j)_{j \geq 1}$
that encapsulates the regularity of the target function, a concept analogous to the
holomorphic function framework explored in \cite{ABDM24} and further discussed
in \cite{LMRS21,SZ19}. Unlike \cite{ABDM24}, which addresses the scenario of
\emph{unknown anisotropy} where learning algorithms operate without explicit
reliance on $(b_j)_{j \geq 1}$, our approach explicitly integrates knowledge of
this sequence into the design of lattice training points and a practical regularization term.
We show theoretically that, by restricting the network parameters to match the regularity
features of the target functions, good generalization error bounds hold
independently of the input dimension $s$. 
This approach aligns with some realistic applications, such as parametric PDEs in uncertainty quantification, where explicit modeling of input random fields is possible.

The numerical experiments in \cite{KKNS25a} as well as new experiments here for the swish and ReLU activation functions show that our tailored regularization consistently performs better than the standard $\ell_2$ regularization.

Given that DNNs are nonlinear algorithms for function approximation, it is natural to ask how our lattice-based periodic DNN \eqref{KKNS:eq:DNN-per} compares to the lattice-based approximation algorithms $A^\dagger$ and $A^*$ (see \eqref{KKNS:eq:trig} and \eqref{KKNS:eq:ker}). For the parametric PDE problem from~\cite{KKKNS22}, we know that the algorithm $A^*$ (and also $A^\dagger$) gives the $L_2$ approximation error convergence rate $N^{-r/2}$ with $r = 1/p^*-1/2$ under the periodic function space setting (b) with norm \eqref{KKNS:eq:norm-per}. On the other hand, Theorem~\ref{KKNS:thm:err}(b)(c) for our lattice-based periodic DNN achieves $N^{-r/2}$ with $r = 1/p^*-1/2$ or $r=1/p^*$, but with an additional training error ${\tt tol}$ to balance. An interesting direction for future research would be to compare the performance of our lattice-based periodic DNN with these methods in practical applications to the same PDE problem discussed in~\cite{KKKNS22}.

\section*{Acknowledgments}

We acknowledge the financial support from the Australian Research Council (DP240100769) and the Research Foundation Flanders (FWO G091920N).

% ------------------------------------------------------------------------------------------
% ------------------------------------------------------------------------------------------
% ------------------------------------------------------------------------------------------
% ------------------------------------------------------------------------------------------
% ------------------------------------------------------------------------------------------

\appendix
\section*{Appendix}

\noindent\textbf{Proof of Theorem~\ref{KKNS:thm:err}(a).}
The proof follows closely the argument in \cite{KSS12}. Starting from \eqref{KKNS:eq:start}, for each component $1\le p\le N_{\rm obs}$, with tailored randomly-shifted lattice rules we have for all $\lambda\in (\tfrac{1}{2},1]$,
\begin{align*}
  {\rm r.m.s.}\; e^{\rm wor}_N(\bsz)\,
  \big\| \big(G(\cdot)_p - G_{\theta}^{[L]}(\cdot)_p\big)^2 \big\|_{\calW_{1,\bsgamma}} 
 \le \bigg(\frac{2\,(16\,C^4)^\lambda\,C_{s,\bsgamma,\lambda}}{N}\bigg)^{\frac{1}{2\lambda}},
\end{align*}
with
\begin{align*} 
 C_{s,\bsgamma,\lambda} := 
 \bigg(
 \sum_{\emptyset\ne\setu\subseteq\{1:s\}} \gamma_\setu^\lambda\,
 \bigg[\frac{2\zeta(2\lambda)\,2^\lambda}{(2\pi)^{2\lambda}} \bigg]^{|\setu|}\,
 \bigg)
 \bigg(\sum_{\setu\subseteq\{1:s\}} \frac{1}{\gamma_\setu} \bigg(
 (|\setu|+1)!\, \prod_{j\in\setu} (\kappa\,S_L\,b_j)
 \bigg)^2
 \bigg)^\lambda,
\end{align*}
where we used the root-mean-square worst case error bound \eqref{KKNS:eq:wce-np} and the norm bound \eqref{KKNS:eq:final-np} but with $b_j$ replaced by $\kappa\,S_L\,b_j$, where $\kappa$ is as in \eqref{KKNS:eq:kappa}. 

The special case $\kappa\,S_L = 1$ corresponds to the ideal scenario when \eqref{KKNS:eq:demand2} is true. In this case, the choice of weights $\gamma_\setu$ in \eqref{KKNS:eq:weight-np} minimizes $C_{s,\bsgamma,\lambda}$ and equates the terms inside the two sums. This is analogous to \cite[Theorem~6.4]{KSS12}.

For all $\kappa\,S_L \ge 1$, we now show that the choice of weights \eqref{KKNS:eq:weight-np} ensures that $C_{s,\bsgamma,\lambda}$ is bounded independently of $s$. Substituting \eqref{KKNS:eq:weight-np} into $C_{s,\bsgamma,\lambda}$, we obtain
\begin{align*}
 C_{s,\bsgamma,\lambda}
 &=
 \bigg(
 \sum_{\emptyset\ne\setu\subseteq\{1:s\}} [(|\setu|+1)!]^{\frac{2\lambda}{1+\lambda}}
 \,\bigg[\frac{2\zeta(2\lambda)\,2^\lambda}{(2\pi)^{2\lambda}} \bigg]^{\frac{\lambda\,|\setu|}{1+\lambda}}
 \prod_{j\in\setu} b_j^{\frac{2\lambda}{1+\lambda}}
 \bigg) \\
 &\qquad\times
 \bigg(
 \sum_{\setu\subseteq\{1:s\}} [(|\setu|+1)!]^{\frac{2\lambda}{1+\lambda}}
 \,\bigg[\frac{2\zeta(2\lambda)\,2^\lambda}{(2\pi)^{2\lambda}} \bigg]^{\frac{\lambda\,|\setu|}{1+\lambda}}
 (\kappa\,S_L)^{2|\setu|}
 \prod_{j\in\setu} b_j^{\frac{2\lambda}{1+\lambda}}
 \bigg)^\lambda  \\
 &\le  
 \bigg(
 \sum_{|\setu|<\infty} [(|\setu|+1)!]^{\frac{2\lambda}{1+\lambda}}
 \prod_{j\in\setu} (\varrho\, b_j)^{\frac{2\lambda}{1+\lambda}} 
 \bigg)^{1+\lambda},
 \qquad \varrho := 
 \bigg[\frac{2\zeta(2\lambda)\,2^\lambda}{(2\pi)^{2\lambda}} \bigg]^{\frac{1}{2}}
 (\kappa\,S_L)^{\frac{1+\lambda}{\lambda}}.
\end{align*}
Now consider arbitrary $\alpha_j>0$ to be specified later. Since $\lambda \in (1/2,1)$ and thus $\frac{2\lambda}{1+\lambda}<1$, we can apply H\"older's inequality with conjugate exponents $\frac{1+\lambda}{2\lambda}$ and $\frac{1+\lambda}{1-\lambda}$ as follows
\begin{align*}
 C_{s,\bsgamma,\lambda}^{\frac{1}{1+\lambda}}
 &\le  \sum_{|\setu|<\infty} [(|\setu|+1)!]^{\frac{2\lambda}{1+\lambda}}
 \prod_{j\in\setu} \alpha_j^{\frac{2\lambda}{1+\lambda}}
 \prod_{j\in\setu} \bigg(\frac{\varrho\, b_j}{\alpha_j}\bigg)^{\frac{2\lambda}{1+\lambda}} \\
 &\le  
 \bigg(\sum_{|\setu|<\infty} (|\setu|+1)! \prod_{j\in\setu} \alpha_j
 \bigg)^{\frac{2\lambda}{1+\lambda}}
 \bigg(\sum_{|\setu|<\infty} 
 \prod_{j\in\setu} \bigg(\frac{\varrho\, b_j}{\alpha_j}\bigg)^{\frac{2\lambda}{1-\lambda}}
 \bigg)^{\frac{1-\lambda}{1+\lambda}} \\
 &\le  
 \bigg(\frac{1}{1- \sum_{j\ge 1} \alpha_j}\bigg)^{\frac{4\lambda}{1+\lambda}}
 \exp\bigg(\frac{1-\lambda}{1+\lambda}\, \varrho^{\frac{2\lambda}{1-\lambda}}
 \sum_{j\ge 1} \bigg(\frac{b_j}{\alpha_j}\bigg)^{\frac{2\lambda}{1-\lambda}}
 \bigg),
\end{align*}
where we used $\sum_{|\setu|<\infty} (|\setu|+1)! \prod_{j\in\setu} \alpha_j \le (\frac{1}{1- \sum_{j\ge 1} \alpha_j})^2$ and $\sum_{|\setu|<\infty} \prod_{j\in\setu} \beta_j \le \exp(\sum_{j\ge 1} \beta_j)$, see \cite[Lemma~6.3]{KSS12}, which hold provided that 
\[
  \sum_{j\ge 1} \alpha_j < 1
  \quad\mbox{and}\quad
  \sum_{j\ge 1} \bigg(\frac{b_j}{\alpha_j}\bigg)^{\frac{2\lambda}{1-\lambda}} < \infty.
\]
Given that $\sum_{j\ge 1} b_j^{p^*} < \infty$, we now choose $\alpha_j = b_j^{p^*}/\vartheta$ for some $\vartheta > \sum_{j\ge 1} b_j^{p^*}$. Then we have $\sum_{j\ge 1} \alpha_j < 1$ and for the second sum to be finite we need
\[
  \frac{2\lambda}{1-\lambda} (1-p^*) \ge p^* \iff \lambda\ge \frac{p^*}{2-p^*}.
\]
We want $\lambda\in (1/2,1)$ to be as small as possible to get the best convergence rate. Hence we specify 
$\lambda$ as in \eqref{KKNS:eq:weight-np} to ensure the best convergence rate with implied constant independent of dimension $s$. This completes the proof.
\hfill\qed

\bigskip

\noindent\textbf{Proof of Theorem~\ref{KKNS:thm:err}(b).}
The proof follows closely the argument in \cite{KKKNS22}. Starting from \eqref{KKNS:eq:start}, for each component $1\le p\le N_{\rm obs}$, with tailored lattice rules we have for all $\lambda\in (\tfrac{1}{2\alpha},1]$,
\begin{align*}
 e^{\rm wor}_N(\bsz)\,
  \big\| \big(G(\cdot)_p - G_{\theta}^{[L]}(\cdot)_p\big)^2 \big\|_{\calW_{\alpha,\bsgamma}} 
 \le \bigg(\frac{2\,(16\,C^4)^\lambda\,C_{s,\bsgamma,\lambda}}{N}\bigg)^{\frac{1}{2\lambda}},
\end{align*}
with
\begin{align*} 
 C_{s,\bsgamma,\lambda} &:= 
 \bigg(
 \sum_{\emptyset\ne\setu\subseteq\{1:s\}} \!\!\! \gamma_\setu^\lambda\,
 \bigg[\frac{2\zeta(2\alpha\lambda)}{(2\pi)^{2\alpha\lambda}} \bigg]^{|\setu|}
 \bigg) \\
 &\qquad \times
 \bigg(\sum_{\setu\subseteq\{1:s\}}\!\!\!  \frac{(2\pi)^{2\alpha|\setu|}}{\gamma_\setu} \bigg(
 \sum_{\bsm_\setu\le\bsalpha_\setu} (|\bsm_\setu|+1)!\,
 \calS(\bsalpha_\setu,\bsm_\setu)\, \prod_{j\in\setu} (\kappa\,S_L\,b_j)^{m_j}\!
 \bigg)^2
 \bigg)^\lambda,
\end{align*}
where $\calS(\alpha_\setu,\bsm_\setu) := \prod_{j\in\setu} \calS(\alpha_,m_j)$ and
we used the worst case error bound \eqref{KKNS:eq:wce-per} and the norm bound \eqref{KKNS:eq:final-per} but with $b_j$ replaced by $\kappa\,S_L\,b_j$, where $\kappa$ is as in \eqref{KKNS:eq:kappa}. As in the previous proof, we proceed to show that the choice of weights \eqref{KKNS:eq:weight-per} allows $C_{s,\bsgamma,\lambda}$ to be bounded independently of $s$.

Abbreviating the choice of weights \eqref{KKNS:eq:weight-per} by
\[
  \gamma_\setu = \varrho^{|\setu|}
  \sum_{\bsm_\setu\le\bsalpha_\setu} [V_\setu(\bsm_\setu)]^{\frac{2}{1+\lambda}},
  \quad
  \varrho := \frac{(2\pi)^{2\alpha}}{[2\zeta(2\alpha\lambda)]^{\frac{1}{1+\lambda}}},
\]
\[
  V_\setu(\bsm_\setu) := (|\bsm_\setu|+1)!\,
  \calS(\bsalpha_\setu,\bsm_\setu)\, \prod_{j\in\setu} b_j^{m_j},
\]
and substituting it into $C_{s,\bsgamma,\lambda}$, we obtain
\begin{align*} 
 &C_{s,\bsgamma,\lambda}
 =
 \bigg(
 \sum_{\emptyset\ne\setu\subseteq\{1:s\}} 
 \bigg(\varrho^{|\setu|} \sum_{\bsm_\setu\le\bsalpha_\setu} [V_\setu(\bsm_\setu)]^{\frac{2}{1+\lambda}}\bigg)^\lambda\,
 \bigg[\frac{2\zeta(2\alpha\lambda)}{(2\pi)^{2\alpha\lambda}} \bigg]^{|\setu|}\,
 \bigg) \\
 &\;\times
 \bigg(\sum_{\setu\subseteq\{1:s\}}  \frac{(2\pi)^{2\alpha|\setu|}}
 {\varrho^{|\setu|}\sum_{\bsm_\setu\le\bsalpha_\setu} [V_\setu(\bsm_\setu)]^{\frac{2}{1+\lambda}}} 
 \bigg(
 \sum_{\bsm_\setu\le\bsalpha_\setu} [V_\setu(\bsm_\setu)]^{\frac{1}{1+\lambda}}\,
 [V_\setu(\bsm_\setu)]^{\frac{\lambda}{1+\lambda}}\,
 (\kappa\,S_L)^{|\bsm_\setu|}
 \bigg)^2
 \bigg)^\lambda \\
 &\le
 \bigg(
 \sum_{\setu\subseteq\{1:s\}} 
 [2\zeta(2\alpha\lambda)]^{\frac{|\setu|}{1+\lambda}}
 \sum_{\bsm_\setu\le\bsalpha_\setu} [V_\setu(\bsm_\setu)]^{\frac{2\lambda}{1+\lambda}}\,
 \bigg)\\
 &\qquad\times
 \bigg(\sum_{\setu\subseteq\{1:s\}}  [2\zeta(2\alpha\lambda)]^{\frac{|\setu|}{1+\lambda}}
 \sum_{\bsm_\setu\le\bsalpha_\setu} [V_\setu(\bsm_\setu)]^{\frac{2\lambda}{1+\lambda}}
 (\kappa\,S_L)^{2|\bsm_\setu|}
 \bigg)^\lambda  \\
 &\le
 \bigg(\sum_{\setu\subseteq\{1:s\}}  [2\zeta(2\alpha\lambda)]^{\frac{|\setu|}{1+\lambda}}
 \sum_{\bsone_\setu\le\bsm_\setu\le\bsalpha_\setu}
 [V_\setu(\bsm_\setu)]^{\frac{2\lambda}{1+\lambda}} 
 (\kappa\,S_L)^{2|\bsm_\setu|}
 \bigg)^{1+\lambda}  \\
 &\le
 \bigg(\sum_{\bsm\le\bsalpha}  
 \bigg((|\bsm|+1)!\,
  \prod_{j=1}^s \widetilde{b}_j^{m_j}
  \bigg)^{\frac{2\lambda}{1+\lambda}}
 \bigg)^{1+\lambda},
 \quad \widetilde{b}_j := \calS_{\max}(\alpha)\,[2\zeta(2\alpha\lambda)]^{\frac{1}{2\lambda}}\,
 (\kappa\,S_L)^{\frac{1+\lambda}{\lambda}}\,
 b_j,
\end{align*}
with $\calS_{\max}(\alpha):= \max_{1\le m\le \alpha} \calS(\alpha,m)$. In the first inequality above, for the first factor we dropped the condition $\setu\ne\emptyset$ and applied Jensen's inequality (i.e., $(\sum_i a_i)^\lambda \le \sum_i a_i^\lambda$ for $a_i\ge 0$ and $0<\lambda\le 1$) on the sum over $\bsm_\setu$, while for the second factor we applied the Cauchy-Schwarz inequality on the inner sum over $\bsm_\setu$ and then cancelled out the sum in the denominator. In the second inequality above, we combined the two factors from the previous step using $\kappa\,S_L\ge 1$, and we also added explicitly the condition $\bsm_\setu\ge \bsone_\setu$, since if any $m_j=0$ then $\calS(\alpha,m_j) = 0$ and so $V_\setu(\bsm_\setu) = 0$. In the third inequality above, we used $\calS(\alpha,m_j)\le \calS_{\max}(\alpha) \le [\calS_{\max}(\alpha)]^{m_j}$ and $2\zeta(2\alpha\lambda)\le [2\zeta(2\alpha\lambda)]^{m_j}$ to bring these factors inside the definition of $\widetilde{b}_j$, and then we rewrote the double sums over $\setu$ and $\bsm_\setu\ge\bsone_\setu$ equivalently as a single sum over $\bsm\le\bsalpha = (\alpha,\ldots,\alpha)\in\bbN^s$.

It remains to show that the last upper bound on $C_{s,\bsgamma,\lambda}$ can be bounded independently of~$s$. We follow closely the argument in \cite[Page~60, Proof of Theorem~4.5]{KKKNS22}, noting that we have $(|\bsm|+1)!$ here instead of $|\bsm|!$ and a different scaling in the definition of $\widetilde{b}_j$ relative to $b_j$.

We define the sequence $d_j := \widetilde{b}_{\lceil j/\alpha\rceil}$ for $j\ge 1$, so that $d_1= \cdots=d_\alpha = \widetilde{b}_1$, $d_{\alpha+1}=\cdots=d_{2\alpha} = \widetilde{b}_2$, and so on. Then
\begin{align} \label{KKNS:eq:def-dj}
 \prod_{j=1}^s \widetilde{b}_j^{m_j}
 = \underbrace{\widetilde{b}_1\cdots \widetilde{b}_1}_{m_1\,\rm{factors}}\cdot
 \underbrace{\widetilde{b}_2\cdots \widetilde{b}_2}_{m_2\,\rm{factors}} \cdots
 \underbrace{\widetilde{b}_s\cdots \widetilde{b}_s}_{m_s\,\rm{factors}} 
 = \prod_{j\in\setv_\bsm} d_j,
\end{align}
where $\setv_\bsm := \{1,\ldots,m_1, \alpha+1,\ldots,\alpha+m_2, \ldots, (s-1)\alpha+1,\ldots,(s-1)\alpha+m_s\}$ and $|\setv_\bsm| = |\bsm|$. Thus
\begin{align} \label{KKNS:eq:ratio}
 &\sum_{\bsm\le\bsalpha}  
 \bigg((|\bsm|+1)!\,\prod_{j=1}^s \widetilde{b}_j^{m_j}
  \bigg)^{\frac{2\lambda}{1+\lambda}}
 \le \sum_{\substack{\setv\subset\bbN \\ |\setv|<\infty}} 
 \bigg((|\setv|+1)!\,\prod_{j\in\setv} d_j
  \bigg)^{\frac{2\lambda}{1+\lambda}} \nonumber\\
 &= \sum_{\ell\ge 0} [(\ell+1)!]^{\frac{2\lambda}{1+\lambda}} 
 \sum_{\substack{\setv\subset\bbN \\ |\setv|=\ell}} 
 \prod_{j\in\setv} d_j^{\frac{2\lambda}{1+\lambda}} 
 \le \sum_{\ell\ge 0} \underbrace{[(\ell+1)!]^{\frac{2\lambda}{1+\lambda}} 
 \frac{1}{\ell!} \bigg(
 \sum_{j\ge 1} d_j^{\frac{2\lambda}{1+\lambda}} \bigg)^\ell}_{=:\, A_\ell}.
\end{align}
The last inequality holds since $(\sum_{j\ge 1} d_j^{\frac{2\lambda}{1+\lambda}})^\ell$ includes all the products of the form $\prod_{j\in\setv} d_j^{\frac{2\lambda}{1+\lambda}}$ with $|\setv|=\ell$, and moreover, includes each such term $\ell!$ times.

Recall that $\sum_{j\ge 1} b_j^{p^*}<\infty$. Our choice of $\lambda := \frac{p^*}{2-p^*}$ in \eqref{KKNS:eq:weight-per} gives $\frac{2\lambda}{1+\lambda} = p^*$, and thus
\begin{align*}
 \sum_{j\ge 1} d_j^{\frac{2\lambda}{1+\lambda}}
 = \sum_{j\ge 1} d_j^{p^*}
 = \alpha\sum_{j\ge 1} \widetilde{b}_j^{p^*}
 = \alpha\,
 [\calS_{\max}(\alpha)]^{p^*}\,[2\zeta(2\alpha\lambda)]^{\frac{1}{1+\lambda}}\,
 (\kappa\,S_L)^2
 \sum_{j\ge 1} b_j^{p^*} < \infty,
\end{align*}
provided that $2\alpha\lambda>1$, which is equivalent to $\alpha > \frac{1}{p^*}-\frac{1}{2}$. This justifies our choice for the integer value of $\alpha := \lfloor \frac{1}{p^*}-\frac{1}{2}+1 \rfloor = \lfloor \frac{1}{p^*}+\frac{1}{2} \rfloor$.

Finally, we claim that the last sum over $\ell$ in \eqref{KKNS:eq:ratio} is finite by the ratio test, since
\begin{align*}
  \lim_{\ell\to\infty} \frac{A_{\ell+1}}{A_\ell}
  = \lim_{\ell\to\infty} \frac{(\ell+2)^{p^*}}{\ell+1} 
  \sum_{j\ge 1} d_j^{p^*}
  \le \lim_{\ell\to\infty} \frac{(\ell+1)^{p^*} + 1}{\ell+1} 
  \sum_{j\ge 1} d_j^{p^*}
  = 0.
\end{align*}
In turn, we conclude that $C_{s,\gamma,\lambda}$ is bounded independently of $s$.
\hfill\qed

\bigskip

\noindent\textbf{Proof of Theorem~\ref{KKNS:thm:err}(c).}
The proof follows closely the argument in \cite{KKS20}. Starting from \eqref{KKNS:eq:start}, for each component $1\le p\le N_{\rm obs}$, with tailored lattice rules we have for all $\lambda\in (\tfrac{1}{\alpha},1]$,
\begin{align*}
 e^{\rm wor}_N(\bsz)\,
  \big\| \big(G(\cdot)_p - G_{\theta}^{[L]}(\cdot)_p\big)^2 \big\|_{\calW_{\alpha,\bsgamma}} 
 \le \bigg(\frac{2\,(4\,C^2)^\lambda\,C_{s,\bsgamma,\lambda}}{N}\bigg)^{\frac{1}{\lambda}},
\end{align*}
with
\begin{align*} 
 C_{s,\bsgamma,\lambda} &:= 
 \bigg(
 \sum_{\emptyset\ne\setu\subseteq\{1:s\}} \!\!\! \gamma_\setu^\lambda\,
 \bigg[\frac{2\zeta(\alpha\lambda)}{(2\pi)^{\alpha\lambda}} \bigg]^{|\setu|}
 \bigg) \\
 &\qquad \times
 \bigg(\max_{\setu\subseteq\{1:s\}}\!\!\!  \frac{(2\pi)^{\alpha|\setu|}}{\gamma_\setu} 
 \sum_{\bsm_\setu\le\bsalpha_\setu} (|\bsm_\setu|+1)!\,
 \calS(\bsalpha_\setu,\bsm_\setu)\, \prod_{j\in\setu} (\kappa\,S_L\,b_j)^{m_j}\!
 \bigg)^\lambda,
\end{align*}
where we used the worst case error bound \eqref{KKNS:eq:wce-K} and the norm bound \eqref{KKNS:eq:final-K} but with $b_j$ replaced by $\kappa\,S_L\,b_j$, where $\kappa$ is as in \eqref{KKNS:eq:kappa}. 

Differently to Theorem~\ref{KKNS:thm:err}(a) and Theorem~\ref{KKNS:thm:err}(b), the weights $\gamma_\setu$ are now chosen such that, in the case of $\kappa\,S_L=1$, 
the maximum over set $\setu$ is exactly $1$. Abbreviating the choice of weights \eqref{KKNS:eq:weight-K} by
\[
  \gamma_\setu = (2\pi)^{\alpha|\setu|}
  \sum_{\bsm_\setu\le\bsalpha_\setu} V_\setu(\bsm_\setu),
  \quad
  V_\setu(\bsm_\setu) := (|\bsm_\setu|+1)!\,
  \calS(\bsalpha_\setu,\bsm_\setu)\, \prod_{j\in\setu} b_j^{m_j},
\]
and substituting it into $C_{s,\bsgamma,\lambda}$, we obtain
\begin{align*} 
 C_{s,\bsgamma,\lambda}
 &=
 \bigg(
 \sum_{\emptyset\ne\setu\subseteq\{1:s\}} 
 \bigg( 
 \sum_{\bsm_\setu\le\bsalpha_\setu} V_\setu(\bsm_\setu)\bigg)^\lambda\,
 [ 2\zeta(\alpha\lambda) ]^{|\setu|}
 \bigg) \\
 &\quad\times
 \bigg(\max_{\setu\subseteq\{1:s\}}  \frac{1}
 {\sum_{\bsm_\setu\le\bsalpha_\setu} V_\setu(\bsm_\setu)} 
 \sum_{\bsm_\setu\le\bsalpha_\setu} 
 V_\setu(\bsm_\setu)\,(\kappa\,S_L)^{|\bsm_\setu|}
 \bigg)^\lambda.
\end{align*}

Note that in all sums over $\bsm_\setu$ we have $\bsm_\setu
\ge\bsone_\setu$, otherwise $V_\setu(\bsm_\setu) = 0$ due to $\calS(\alpha,0) = 0$. 
When $\kappa\,S_L = 1$, the maximum over $\setu$ is exactly $1$ so we can ignore the second factor.
Using Jensen's inequality, the sum over $\setu$ in the first factor can be bounded by
\begin{align*} 
 C_{s,\bsgamma,\lambda}
 \le \sum_{\setu\subseteq\{1:s\}} 
 \sum_{\bsone_\setu\le \bsm_\setu\le\bsalpha_\setu} [V_\setu(\bsm_\setu)]^\lambda\,
 [ 2\zeta(\alpha\lambda) ]^{|\setu|} 
 &\le \sum_{\bsm\le\bsalpha} \bigg( (|\bsm|+1)! \prod_{j=1}^s \widetilde{b}_j^{m_j}\bigg)^\lambda \\
 &\le \sum_{\ell\ge 0} [(\ell+1)!]^\lambda\,\frac{1}{\ell !} \bigg(\sum_{j\ge 1} d_j^\lambda\bigg)^\ell,
\end{align*}
where now $\widetilde{b}_j := \calS_{\max}(\alpha)\,[ 2\zeta(\alpha\lambda) ]^{\frac{1}{\lambda}}\,b_j$, and
$d_j$ is defined as in \eqref{KKNS:eq:def-dj}--\eqref{KKNS:eq:ratio} but using the new definition of $\widetilde{b}_j$. Taking now $\lambda = p^*$ yields 
\[
  \sum_{j\ge 1} d_j^\lambda = \sum_{j\ge 1} d_j^{p^*}
  = \alpha \sum_{j\ge 1} \widetilde{b}_j^{p^*}
  = \alpha\,[\calS_{\max}(\alpha)]^{p^*}\,[ 2\zeta(\alpha\lambda)]\,  \sum_{j\ge 1} b_j^{p^*} < \infty,
\]
provided that $\alpha\lambda>1$, which justifies the choice $\alpha := \lfloor\frac{1}{p^*}\rfloor+1$ in \eqref{KKNS:eq:weight-K}. The same argument as in the last part of the proof of Theorem~\ref{KKNS:thm:err}(b) then implies that $C_{s,\bsgamma,\lambda}$ is bounded independently of $s$.

When $\kappa\,S_L > 1$, since $\bsm_\setu \ge\bsone_\setu$, every factor inside the maximum over $\setu$ is at least $(\kappa\,S_L)^{|\setu|}$ and at most $(\kappa\,S_L)^{\alpha\,|\setu|}$. Hence $C_{s,\bsgamma,\lambda}$ includes this extra factor between $(\kappa\,S_L)^s$ and $(\kappa\,S_L)^{\alpha\,s}$, 
which unfortunately grows exponentially with $s$.
\hfill\qed

%%% To ensure the bibliography has the correct style please run bibtex with the spmpsci style
%%% which is included in the Springer zip file
%%% Here we assume refs.bib would be the name of the bib file containing bibliographic info
%%% You can then copy the .bbl produced, as given in the example below
%%%
%\bibliographystyle{spmpsci}
%\bibliography{refs}
%
% ---- Bibliography ----
%

\ifdefined\arxivstyle
 %\newpage
 \subsection*{Authors' addresses} 

 \textbf{Alexander Keller} \\
 NVIDIA, Berlin, Germany. Email: akeller@nvidia.com 

 \smallskip\noindent\textbf{Frances Y. Kuo} \\ 
 School of Mathematics and Statistics, UNSW Sydney, Sydney, Australia. Email: f.kuo@unsw.edu.au 

 \smallskip\noindent\textbf{Dirk Nuyens} \\ 
 Department of Computer Science, KU Leuven, Leuven, Belgium. Email: dirk.nuyens@kuleuven.be 

 \smallskip\noindent\textbf{Ian H. Sloan} \\ 
 School of Mathematics and Statistics, UNSW Sydney, Sydney, Australia. Email: i.sloan@unsw.edu.au
\fi

\end{document}